\documentclass{article}
\usepackage[margin=1in]{geometry}
\usepackage{authblk}
\usepackage[authoryear,round,sort&compress]{natbib}
\usepackage{amsmath,amssymb,amsthm,mathtools}
\usepackage{libertinus}
\usepackage[libertine]{newtxmath}
\usepackage[T1]{fontenc}
\usepackage{booktabs,graphicx,xcolor}
\definecolor{linkcol}{rgb}{0,0,0.55}\definecolor{citecol}{rgb}{0,0.45,0}
\PassOptionsToPackage{colorlinks=true,linkcolor=linkcol,citecolor=citecol}{hyperref}
\usepackage{hyperref}
\usepackage{url}

\newcommand{\R}{\mathbb{R}}
\newcommand{\U}{\mathcal{U}}
\newcommand{\Dx}{\Delta\chi}
\newtheorem{proposition}{Proposition}

\newtheorem{lemma}{Lemma}

\title{The Null Is the Hard Part:\\ Exact Tests for Memorization in Generative Models}

\author[1]{Sushovan Majhi}
\author[2]{Pramita Bagchi}
\affil[1]{\small Data Science, George Washington University, USA (s.majhi@gwu.edu)}
\affil[2]{\small Statistics, George Washington University, USA (pramita.bagchi@gwu.edu)}
\date{}

\begin{document}
\maketitle

\begin{abstract}
Memorization audits of generative models read similarity scores against thresholds, with no null distribution, and the conclusions they support can be wrong. By MemBench's rule, the benchmark's mitigations roughly halve Stable Diffusion's memorization; audited with false-discovery control, two thirds of the certified images are no longer detected under random prompt perturbations, five sixths under attention rescaling, and all of them under embedding optimization. The field's data-copying test, read against its own null, flags ten of twenty-four generators that reproduce nothing. We argue that for memorization the null is the hard part, and supply two. For a whole model, training and held-out images are exchangeable given its samples, and relabelling them is a permutation test, exact for any statistic when the held-out images are a random split; under it, a nearest-neighbour preference still fires on seven of those twenty-four, and a count restricted to the near-duplicate scale on none (McNemar $p=0.016$). For single images, the natural nulls fail twice, measurably: ranking an image among random images yields $596$ false discoveries among $2{,}365$ controls, and resampling independent generations makes the null three times too narrow. Calibrated against matched controls, the audit certifies $36$ of $61$ MemBench images at $5\%$ false-discovery rate, held-out controls are certified in $0.01\%$ of calibration splits, and on this benchmark two generations per image recover that count. A calibrated maximum, which reads occasional rather than typical copying, certifies $46$. As the scale-restricted statistic we recommend the small-scale mass of the Intersection Euler Characteristic Profile, which also counts distinct images copied and tests whether two models copy the same ones.
\end{abstract}

\section{Introduction}\label{sec:intro}

Diffusion models reproduce training images \citep{carlini2023extracting,somepalli2023diffusion}, and the detectors that followed---nearest-neighbour distance, copy-detection similarity \citep{pizzi2022sscd}, noise-prediction magnitude \citep{wen2024detecting}, trigger-prompt benchmarks \citep{webster2023reproducible,hong2024membench}---are almost all scores read against thresholds. A reported memorization rate has no error bar, an audit that screens thousands of images controls no false-discovery rate, and a mitigation is scored by how far a thresholded count falls. The sentence a practitioner wants to write---\emph{this model reproduces these $k$ training images}---is a judgment about a histogram.

This is an inference paper, and its thesis is that for memorization the null is the hard part: statistics are easy to come by, and the natural ways to calibrate them are wrong in ways that change conclusions. We make two nulls precise and measure what goes wrong without them. \emph{For a whole model}, the null is that the training images and matched held-out images are exchangeable given the model's samples, so relabelling which real images are ``training'' gives an exact permutation test for any statistic. Testing training use through a relabellable exchangeability is established in data-use and privacy auditing, where randomly publishing or inserting data before training enforces it \citep{huang2024general,steinke2023privacy,maini2021dataset}; here it is applied to reproduction in a model's samples, needs no marking and no retraining, and is exact by design when the held-out images are a random split. The one prior test of the same three samples, the data-copying test of \citet{meehan2020nonparametric}, calibrates against a different null---that a generated point is as likely as a held-out one to lie nearer the training set---which a model violates by concentrating where the data has mass, without seeing a single training image. \emph{For single images} there is nothing to relabel, and the null is that an image's statistic is exchangeable with those of control images generated the same way, a conformal outlier test \citep{bates2023conformal}. Both natural ways to build its calibration set fail, and we measure both.

Where topology enters is narrower, and we say so. Under the exact null, a statistic that reads average proximity to the training set rejects on models that merely overfit; a test of reproduction must be restricted to the scale at which copies live. A thresholded nearest-neighbour count does that. The statistic we recommend is the small-scale mass of the Intersection Euler Characteristic Profile \citep{kawamura2026intersection}, which is that count integrated over every threshold below the near-duplicate scale: its decision is steadier in the threshold, it counts distinct images copied rather than copies, and its $k$-fold form tests whether two models copy the same images. At the operating scale its specificity matches the count's (Section~\ref{sec:ddpm}).

\paragraph{Contributions.}
\begin{itemize}
\item \textbf{The exact null for memorization, and what the existing one gets wrong} (Section~\ref{sec:method}). Read against its normal reference, the data-copying statistic flags $10$ of $24$ generators that reproduce nothing; for a model that never sees its training set it reports copying at $Z_U=-9.8$ where the swap test gives $p=0.91$.
\item \textbf{The statistic decides what a rejection means} (Section~\ref{sec:ddpm}). On $29$ generators with ground truth, every statistic detects the $4$ that reproduce; of the $24$ that do not, the nearest-neighbour fraction rejects on $7$, the data-copying statistic on $5$, and a count restricted to the near-duplicate scale on none. Ten copies among $500$ samples, injected into a model that copies nothing, are caught by the scale-restricted statistics in every replicate; the nearest-neighbour fraction needs fifty.
\item \textbf{A per-image audit, with two failure modes measured} (Section~\ref{sec:sd}). Semantic leakage yields $596$ false discoveries among $2{,}365$ controls, and false concentration makes the null three times too narrow. Calibrated correctly, the audit certifies $36$ of $61$ images by Benjamini--Hochberg and $44$ by Storey's adaptive procedure, both with marginal false-discovery control, while held-out controls are certified in $0.01\%$ of calibration splits; on this benchmark two generations per image recover the count, matching generation counts between images and controls changes it little, and a calibrated maximum, which reads occasional rather than typical copying, certifies $46$.
\item \textbf{What a correct null changes} (Section~\ref{sec:changes}): mitigations scored with an error rate, the unit an audit must use, and shared memorization across models tested under the same null.
\end{itemize}

\subsection{Related work}\label{sec:related}

\paragraph{Detectors.} Beyond the detectors of Section~\ref{sec:intro} \citep{somepalli2023understanding}, memorization is explained mechanistically by \citet{gu2023memorization} and geometrically by \citet{ross2025geometric}, without a null.

\paragraph{Tests of training use.} The data-copying literature tests our three samples: \citet{meehan2020nonparametric} with an asymptotic rank test, \citet{bhattacharjee2023datacopying} with a pointwise definition, a detection method with guarantees and lower bounds, and \citet{alaa2022faithful} with a per-sample authenticity score. Data-use and privacy auditing obtain exact tests by randomization before training---publishing one of two marked copies at random \citep{huang2024general,huang2025instance} or inserting canaries at random \citep{steinke2023privacy}---and dataset inference compares suspect with held-out data, which must be a random split \citep{maini2021dataset,maini2024llm}; \citet{zhang2025membership} argue that without such randomization membership cannot be proved. Membership inference now reports calibrated per-example error rates \citep{carlini2022membership,duan2023diffusionmia,kong2024diffusionmia} and false-discovery control \citep{zhao2025miafdr}, and conformal outlier detection supplies the image-level inference \citep{bates2023conformal}. We differ in testing \emph{reproduction in a model's samples} rather than membership, with no marking and no retraining, and in measuring the calibration failures specific to generative audits. Deduplication \citep{kandpal2022deduplicating,webster2023reproducible}, differentially private training \citep{dockhorn2023dpdiffusion} and provable copyright protection \citep{vyas2023provable} are guarantees about models, where ours are error rates for measurements.

\paragraph{Shape of generated samples.} Precision and recall use unions of nearest-neighbour balls \citep{sajjadi2018assessing,kynkaanniemi2019improved,naeem2020reliable}; the Geometry Score \citep{khrulkov2018geometry}, the Manifold Topology Divergence \citep{barannikov2021manifold} and Euler characteristic curves \citep{dlotko2023euler} compare the shape of generated and real distributions. None of these attaches a null or separates reproduction from distributional fit; inference for topological summaries themselves, with minimax rates for persistence diagrams, is developed by \citet{bagchi2026inference}. The invariant we use and its certified use are developed in \citet{kawamura2026intersection} and \citet{majhi2026disentangle}, after the mixup barcode \citep{wagner2024mixup}.

\section{Two nulls}\label{sec:method}

\paragraph{Setting.} Let $G$ be $n_G$ samples of a generative model, $T$ a set of $m$ training images and $H$ a set of $m$ held-out images, as points of $\R^d$ (pixels or an embedding). A statistic $S(G,T,H)$ measures how much more the samples resemble $T$ than $H$. The statistics are the nearest-neighbour fraction $T_{\mathrm{NN}}$, the share of samples whose nearest real point lies in $T$; the data-copying statistic $Z_U$ of \citet{meehan2020nonparametric}; the thresholded count $T_{\mathrm{cnt}}$, samples within $r_0$ of a training image minus those within $r_0$ of a held-out one, with $r_0$ a quarter of the median nearest-neighbour spacing of $T\cup H$; an embedding similarity; the topological $T_{\mathrm{small}}$ defined below; and, in Appendix~\ref{sec:baseline}, the first interaction scale $T_{\mathrm{scale}}$.

\paragraph{The model-level null.}
\begin{proposition}[Swap test]\label{prop:swap}
Let $H_0$ be that, conditionally on $G$, the $2m$ real images $T\cup H$ are exchangeable with respect to their training/held-out labels. For any statistic $S(G,T,H)$ and $B$ relabellings $(T^\sigma,H^\sigma)$ of the pooled real points into two sets of size $m$, drawn independently and uniformly at random and independently of the data, the $p$-value $\big(1+\#\{b:S(G,T^{\sigma_b},H^{\sigma_b})\ge S(G,T,H)\}\big)/(1+B)$ satisfies $\Pr_{H_0}[p\le\alpha]\le\alpha$ for every $\alpha$, $m$, $n_G$ and $B$.
\end{proposition}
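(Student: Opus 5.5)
The plan is the standard Monte Carlo permutation-test argument: condition on $G$ and on the unordered pooled set $P=T\cup H$, and show that the observed labelling and the $B$ random relabellings form an exchangeable collection. The rank of the observed statistic among them is then (super-)uniform, and the $p$-value is valid. Write $\sigma_0$ for the true labelling and $S_b=S(G,T^{\sigma_b},H^{\sigma_b})$ for $b=0,\dots,B$, so that $S_0=S(G,T,H)$.

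\textbf{Step 1: reduce the null to a statement about the labelling.} Under $H_0$, conditionally on $G$ and on $P$, the true split $\sigma_0$ is uniformly distributed over the $\binom{2m}{m}$ ways of choosing which $m$ pooled points are labelled training. I would state this as the operational content of label exchangeability. If points can repeat, I would index the pooled points $1,\dots,2m$ and work with index subsets, so that ties among points cause no trouble. The relabellings $\sigma_1,\dots,\sigma_B$ are drawn i.i.d.\ uniformly over the same set of splits, independently of the data. Hence, conditionally on $(G,P)$, the vector $(\sigma_0,\sigma_1,\dots,\sigma_B)$ is i.i.d.\ uniform over splits. In particular it is exchangeable, and so is $(S_0,\dots,S_B)$, since each $S_b$ is the same fixed function of $(G,P,\sigma_b)$.

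\textbf{Step 2: the rank bound with ties.} The $p$-value equals $\frac{1}{B+1}\#\{b\in\{0,\dots,B\}:S_b\ge S_0\}$, because the term $b=0$ accounts for the "$1+$". For exchangeable reals, I would show
\[
\Pr\Big[\#\{b:S_b\ge S_0\}\le k\Big]\le \frac{k}{B+1}\qquad\text{for } k=0,1,\dots,B+1.
\]
By exchangeability, this probability equals the average over $j$ of $\Pr[\#\{b:S_b\ge S_j\}\le k]$. That average is $\frac{1}{B+1}\mathbb{E}\big[\#\{j:\#\{b:S_b\ge S_j\}\le k\}\big]$. Deterministically, at most $k$ indices $j$ can have at most $k$ values $S_b\ge S_j$. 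To see this, sort the values in decreasing order. If $j$ qualifies, then every $j'$ with $S_{j'}\ge S_j$ also satisfies $\#\{b:S_b\ge S_{j'}\}\le k$, so the qualifying set is an upper set of size at most $k$. Ties only make the count larger, which helps.

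\textbf{Step 3: conclude.} Take $k=\lfloor\alpha(B+1)\rfloor$. Then $\Pr[p\le\alpha\mid G,P]\le k/(B+1)\le\alpha$. Averaging over $(G,P)$ gives the unconditional statement for every $\alpha$, $m$, $n_G$ and $B$. No property of $S$ is used, which is why the result holds for any statistic.

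The main obstacle is Step 1, specifically making precise that "exchangeable with respect to labels conditionally on $G$" yields a uniform true split given the unordered pool, when real images may coincide or when $P$ is not determined by the labels. The index formalization resolves this: one models $(T,H)$ as a uniformly random split of $2m$ indexed points whose law given $G$ is invariant under index permutations. The random-split design the paper mentions guarantees this. The tie-handling in Step 2 is routine once the counting argument is written down.
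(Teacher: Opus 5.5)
Your proposal is correct and follows the paper's proof in Appendix~\ref{app:swap}. As there, you condition on $G$ and the unordered pool, observe that the true split and the $B$ relabellings are i.i.d.\ uniform over the $\binom{2m}{m}$ splits, so the $B+1$ statistics are exchangeable, and then apply the tie-robust rank bound. The one difference is that you prove that bound yourself, by the counting argument (the qualifying indices form an upper set, and its minimal element shows it has at most $k$ members), where the paper simply cites \citet[Sec.~15.2]{lehmann2005testing}.
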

The proof is the standard exchangeability argument (\citealp[Sec.~15.2]{lehmann2005testing}; Appendix~\ref{app:swap}). Anything computed without the labels, from $G$ and the pooled real points, such as $r_0$ or a projection fitted on $G\cup T\cup H$, is unchanged by relabelling, so reusing it keeps the test exact. A model that has not memorized may prefer the \emph{distribution} of its training data, but it cannot prefer particular training images over held-out images drawn the same way. When $T$ and $H$ are a random split of one pool, $H_0$ holds by design for such a model. Every CIFAR-10 model below that was trained on a subset is therefore measured twice: with $H$ from the standard test split, and with $H$ drawn from training images it never saw, which is such a split. On Stable Diffusion no random split exists. There $H$ consists of training images the model did not memorize, and exchangeability is an assumption about how well the two sets match, the gap \citet{zhang2025membership} identify for any test of training use without randomization. We report the control arm, generations from non-memorized captions, as a diagnostic of that assumption, not a proof of it.

\paragraph{Why not the data-copying null.} \citet{meehan2020nonparametric} rank the distances to $T$ of generated against held-out points. Their null is $\Delta_T=1/2$: a generated point is as likely as a held-out one to lie nearer the training set, as it is when the model's distribution equals the data's. They read $\Delta_T<1/2$ as copying and calibrate the rank statistic against a standard normal. The test is valid for that null, and rejecting when $\Delta_T<1/2$ is what it is built to do. The difficulty is the reading, because $\Delta_T<1/2$ does not require access to $T$.
\begin{proposition}[A model that never sees its training set]\label{prop:meehan}
Let $T$ be a sample of size $m_T$ and $H$ a sample of size $m$ from $P$, independent of each other, and $G$ a sample from $Q$ independent of $(T,H)$. (i) When $m_T=m$ the null of Proposition~\ref{prop:swap} holds, so every swap test is exact. (ii) Fix $T$, let $d(x)=\min_{t\in T}\|x-t\|$ have a continuous distribution under $P$ and $Q$, and let $\Delta_T=\Pr[d(Y)>d(X)\mid T]$ for $Y\sim Q$, $X\sim P$. If $\Delta_T<1/2$, the data-copying statistic $Z_U$ computed from the $n_G$ samples of $Q$ and the $m$ held-out samples tends to $-\infty$ in probability as $n_G,m\to\infty$, so the data-copying test rejects at every fixed level with probability tending to one.
\end{proposition}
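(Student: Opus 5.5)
For part (i) we would show directly that the law of $(G,T,H)$ is invariant under relabelling. Since $G$ is independent of $(T,H)$, conditioning on $G$ leaves the law of $(T,H)$ unchanged. When $m_T=m$, the pooled vector $(t_1,\dots,t_m,h_1,\dots,h_m)$ consists of $2m$ i.i.d.\ draws from $P$, and any permutation of i.i.d.\ coordinates has the same joint law. In particular, every split of the pool into two labelled sets of size $m$ has the same distribution as $(T,H)$. This is exactly $H_0$ of Proposition~\ref{prop:swap}, and invoking that proposition gives exactness of every swap test. No step here is delicate.

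For part (ii) we condition on $T$ throughout, so that $d(\cdot)$ is a fixed measurable function. The scores $U_i=d(Y_i)$, $i\le n_G$, are then i.i.d.\ with continuous law $F_Q$, and the scores $V_j=d(X_j)$, $j\le m$, are i.i.d.\ with continuous law $F_P$, independent of the $U_i$. The data-copying statistic is the standardized Mann--Whitney statistic. Writing $\hat\Delta=(n_Gm)^{-1}\sum_{i,j}\mathbf 1\{U_i>V_j\}$, we have
\[
Z_U=\frac{\hat\Delta-1/2}{\sigma_0},\qquad \sigma_0^2=\frac{n_G+m+1}{12\,n_Gm},
\]
where $\sigma_0$ is the null standard deviation. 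Continuity of $d$ under both $P$ and $Q$ rules out ties almost surely, so no tie correction enters.

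The core step is consistency of $\hat\Delta$. It is a two-sample U-statistic with bounded kernel and mean $\Delta_T$. Its variance is $O(1/n_G+1/m)$, by the standard Hoeffding decomposition or by bounding the covariances between the $O(n_Gm(n_G+m))$ pairs of terms sharing an index. Chebyshev's inequality then gives $\hat\Delta\to\Delta_T$ in probability as $n_G,m\to\infty$. On the other side, $\sigma_0\to0$. Hence
\[
Z_U=\frac{(\Delta_T-1/2)+o_P(1)}{\sigma_0}.
\]
The numerator converges in probability to a strictly negative constant and the denominator tends to zero deterministically, so $Z_U\to-\infty$ in probability.

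Rejection at level $\alpha$ occurs when $Z_U<z_\alpha$ for a fixed finite critical value $z_\alpha$ (or $|Z_U|>z_{\alpha/2}$ in the two-sided version). Either event therefore has probability tending to one. Since this holds conditionally on every fixed $T$ with $\Delta_T<1/2$, it also holds unconditionally by dominated convergence over any set of such $T$.

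The main obstacle is matching the exact form of $Z_U$ used by \citet{meehan2020nonparametric}. In particular, one must check that they standardize by the null variance rather than by an estimated variance. The argument survives either choice: an estimated variance is still $O_P(1/n_G+1/m)$, because the U-statistic variance bound above applies to it. A second point to verify is that the joint limit $n_G,m\to\infty$ needs no rate condition, and the bound $O(1/n_G+1/m)$ makes this immediate. Finally, if their statistic averages over cells, the same argument applies cellwise, with a weighted sum of cellwise $\hat\Delta$'s converging to the weighted $\Delta_T$.
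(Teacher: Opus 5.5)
Your argument is correct and has the same architecture as the paper's. Part (i) is the same observation: given $G$, the $2m$ real points are i.i.d.\ from $P$ and hence exchangeable under relabelling. In part (ii) both proofs condition on $T$, show $U/L\to\Delta_T$ with $L=n_Gm$, and then use that the normalisation $L/\sigma_U=\sqrt{12L/(n_G+m+1)}$ diverges.

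The difference is the concentration step. The paper applies the bounded-differences (McDiarmid) inequality: changing one $Y_i$ or one $X_j$ moves $U/L$ by at most $1/n_G$ or $1/m$. This gives $\Pr[|U/L-\Delta_T|\ge t\mid T]\le 2\exp(-2t^2L/(n_G+m))$, which identifies the step with Theorem~1 of \citet{meehan2020nonparametric}, including the factor $2$ they omit. It also shows that the test rejects with probability tending to one exponentially fast in $\min(n_G,m)$. You instead bound the variance of the two-sample U-statistic by counting covariances between terms that share an index, and then apply Chebyshev. That route is more elementary and gives only a polynomial rate, but convergence in probability is all the statement asks for.

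Your closing remarks go beyond the statement. The paper, like \citet{meehan2020nonparametric}, standardises by the null variance, so no estimated variance arises. For that case your stated justification, that the variance bound on $\hat\Delta$ ``applies to'' an estimator of it, does not follow as written; it would need its own argument, though that argument is easy for any plug-in estimator with bounded components. The cell-weighted $C_T$ is not covered by the proposition.
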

Proofs are in Appendix~\ref{app:proofs}. $\Delta_T<1/2$ says only that $Q$'s samples lie nearer $T$ than $P$'s do, which a model concentrated where $P$ has mass achieves without seeing $T$. With data $N(0,I_{10})$, $200$ training and $200$ held-out points, and a model that samples $N(0,s^2I)$ without seeing $T$, $Z_U$ reports copying at $-9.8$ for $s=0.7$ and underfitting at $+10.3$ for $s=1.3$, while the swap test gives $p=0.91$ and $0.79$; a model that copies a fifth of its samples from $T$ is caught by both ($Z_U=-3.7$; swap $p=0.0005$). The concentrated model's samples lie nearer every real point, trained on or not, and only relabelling removes that (Appendix~\ref{sec:baseline}).

\paragraph{The image-level null.} For one candidate image $x_i$ with its own generations $G_i$ there is nothing to relabel. Let $c_1,\dots,c_N$ be $N$ control images the model has not memorized, each with generations $G_{c_j}$ produced as $G_i$ was, and write $s_i=S(G_i,x_i)$ and $s_{c_j}=S(G_{c_j},c_j)$. The null is that $s_i$ is exchangeable with $s_{c_1},\dots,s_{c_N}$. Then $p_i=(1+\#\{j:s_{c_j}\ge s_i\})/(N+1)$ is a conformal $p$-value. $p$-values sharing one calibration set are positively dependent in the sense of \citet{benjamini2001control}, and at nominal level $q$, with $\pi_0$ the fraction of candidate images that are true nulls, Benjamini--Hochberg \citep{benjamini1995controlling} controls the false-discovery rate at $\pi_0 q$ and Storey's adaptive procedure \citep{storey2002direct,storey2004strong} at $q$, both marginally over the calibration set \citep[Cor.~1, Thm.~3]{bates2023conformal}. We say the audit \emph{certifies} the images it selects: the guarantee is on the selected set, whose expected fraction of non-memorized images is at most $q$, and not on any single image. Everything rests on ``produced as $G_i$ was'', and Section~\ref{sec:sd} measures two ways to get it wrong.

\begin{figure}[t]
\centering\includegraphics[width=0.74\textwidth]{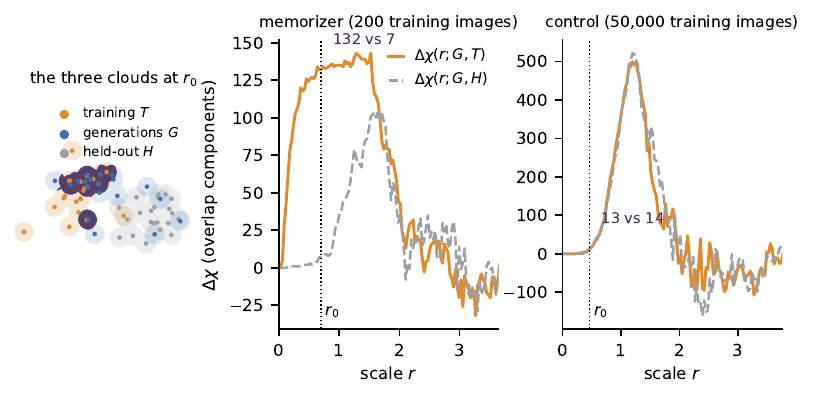}
\caption{The recommended statistic. \emph{Left:} the three clouds at the near-duplicate scale $r_0$; where a generation sits on a training image the two offsets meet, and the dark region is one blob per training image reproduced---this is what $\Dx(r_0;G,T)$ counts. \emph{Middle and right:} the measured profiles of two CIFAR-10 diffusion models. For the $200$-image memorizer the overlap with the training set has $132$ components at $r_0$ against $7$ for the held-out cloud; for the $50{,}000$-image control the two are $13$ and $14$, which is what exchangeability looks like. The swap test asks whether the gap between the two curves is larger than relabelling which real images are ``training'' would produce. (These are the swap statistic's profiles on its scale grid; the leak counts of Table~\ref{tab:sweep} are read by the readout protocol, at the near-duplicate scale of the real cloud alone, and come out at $131$ against $15$ and $20$ against $19$.)}
\label{fig:method}
\end{figure}

\paragraph{The recommended statistic.} For a finite cloud $X$ write $\U(X;r)=\bigcup_{x\in X}B(x,r)$ with $B(x,r)$ the closed ball. The Intersection Euler Characteristic Profile of two clouds,
\[
  \Dx(r;\,X,Y)\;=\;\chi\big(\U(X;r)\cap\U(Y;r)\big)\;=\;\chi_X(r)+\chi_Y(r)-\chi_{X\cup Y}(r),
\]
is an inclusion--exclusion over three Alpha complexes \citep{edelsbrunner1994alpha} (Lemma~\ref{lem:ie}) computed by one sorted sweep \citep{kawamura2026intersection}. At a scale small enough that the overlap is a union of disjoint contractible pieces, $\Dx$ counts them, one per training image with a sample on it.
\begin{proposition}[The leak count is exact under separation]\label{prop:leak}
Fix $r>0$ and suppose (i) $\|t_i-t_j\|>2r$ for every $i\neq j$, and (ii) every $g\in G$ lying within $2r$ of some $t_i$ in fact lies within $r$ of it. Then $\Dx(r;G,T)=\#\{i:\exists\,g\in G,\ \|g-t_i\|\le r\}$, the number of distinct training images that a sample reproduces to within $r$.
\end{proposition}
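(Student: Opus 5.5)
The plan is to decompose the intersection $\U(G;r)\cap\U(T;r)$ into pieces indexed by training images, show each nonempty piece is contractible, and then use additivity of $\chi$ over disjoint unions. We do not need Lemma~\ref{lem:ie}. The statement is about the Euler characteristic of the intersection itself, and we compute it directly as a topological space, a compact union of finitely many intersections of balls.

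\textbf{Step 1: localize.} Write $\U(G;r)\cap\U(T;r)=\bigcup_{i}\big(\U(G;r)\cap B(t_i,r)\big)$. A ball $B(g,r)$ meets $B(t_i,r)$ only if $\|g-t_i\|\le 2r$. By (ii), this forces $\|g-t_i\|\le r$, and then $t_i\in B(g,r)$. Set $G_i=\{g\in G:\|g-t_i\|\le r\}$ and $A_i=B(t_i,r)\cap\bigcup_{g\in G_i}B(g,r)$. The intersection is then $\bigcup_i A_i$, and $A_i\neq\emptyset$ exactly when $G_i\neq\emptyset$, which is exactly the condition counted on the right-hand side.

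\textbf{Step 2: contractibility.} If $G_i\neq\emptyset$, then $A_i$ is star-shaped about $t_i$. Take $y\in B(t_i,r)\cap B(g,r)$ with $g\in G_i$. Both $t_i$ and $y$ lie in the convex set $B(t_i,r)\cap B(g,r)$, because $\|t_i-g\|\le r$. Hence the whole segment $[t_i,y]$ lies in $B(t_i,r)\cap B(g,r)\subseteq A_i$. A compact star-shaped set is contractible, so $\chi(A_i)=1$.

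\textbf{Step 3: disjointness and additivity.} The $A_i$ lie inside the balls $B(t_i,r)$, and by (i) these are pairwise disjoint, since $\|t_i-t_j\|>2r$ makes the closed balls disjoint. So $\bigcup_i A_i$ is a finite disjoint union of compact sets. Each $A_i$ is therefore open and closed in the union, and $\chi$ is additive over topological components. This gives $\Dx(r;G,T)=\#\{i:G_i\neq\emptyset\}$.

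\textbf{Main obstacle.} The delicate point is making sure the $\chi$ in the definition of $\Dx$ agrees with the topological Euler characteristic of the union. The paper computes $\Dx$ through Alpha complexes, and the nerve theorem makes those homotopy equivalent to the ball unions. The Euler characteristic of the intersection is then well defined and homotopy invariant, so the contractibility argument suffices. The other thing to check carefully is Step 2's use of (ii): without it, a sample at distance in $(r,2r]$ from $t_i$ could produce a lens region not containing $t_i$, and star-shapedness about $t_i$ would fail.
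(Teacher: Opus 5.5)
Your proof is correct and follows the paper's argument essentially step for step: you localize the intersection to the balls $B(t_i,r)$, which are pairwise disjoint by (i), and use (ii) to see that every nonempty lens $B(g,r)\cap B(t_i,r)$ is convex and contains $t_i$, so each piece is star-shaped about $t_i$; you then sum $\chi=1$ over the nonempty pieces by additivity over disjoint compact pieces. The only cosmetic difference is that you discard the empty lenses up front by indexing with $G_i$, whereas the paper takes the union over all $g\in G$ and notes that the extra terms are empty.
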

Hypothesis (i) is what $r_0$ targets, though $r_0$ is set from the \emph{median} spacing, so (i) fails for any two training images closer than half of it, as near-duplicate training images are; (ii) is the substantive one, and requires generations to be copies or clearly not, with nothing in between.
The statistic, illustrated in Figure~\ref{fig:method}, is
\[
  T_{\mathrm{small}}=\textstyle\int_0^{r_0}|\Dx(r;G,T)|\,dr-\int_0^{r_0}|\Dx(r;G,H)|\,dr,
\]
with $T_{\mathrm{mass}}$ the same integral over all scales. Where the hypotheses of Proposition~\ref{prop:leak} hold at every scale up to $r_0$, it is the distinct-image count at threshold $r$ integrated over $r\le r_0$. Two by-products come free: the excess component count at $r_0$, the \emph{leak count}, reads how many distinct images leaked, and the three-fold profile $\chi(\U(G_1;r)\cap\U(G_2;r)\cap\U(T;r))$ reads which images two models both reproduce. The Alpha complex costs $O(n^{\lceil d/2\rceil})$, so topological statistics are computed on a PCA projection to $d_0=5$ dimensions fitted on the pooled clouds. The restriction to small scales is what separates copying from leaning: samples that lean toward $T$ without sitting on training points leave $T_{\mathrm{small}}$ silent, while $T_{\mathrm{NN}}$ sees the lean at full strength.

\section{What the statistic decides}\label{sec:ddpm}

\begin{figure}[t]
\centering\includegraphics[width=0.70\textwidth]{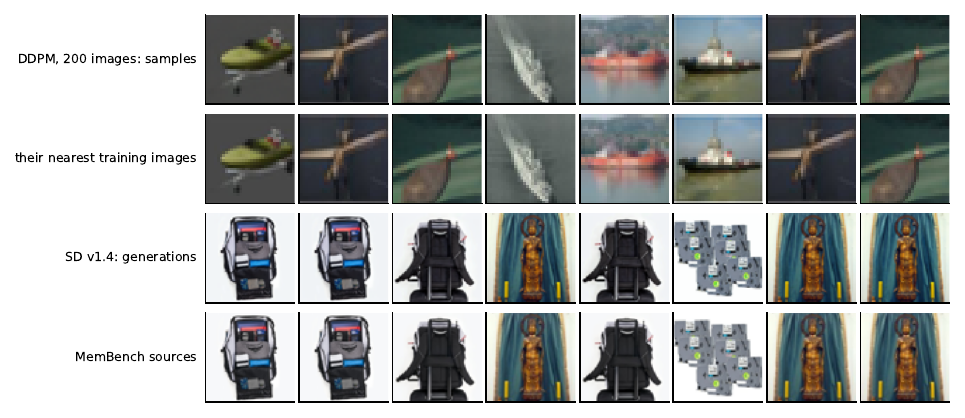}
\caption{Pixel-level reproduction. Top two rows: samples of a diffusion model trained on $200$ CIFAR-10 images, above their nearest training images. Bottom two rows: Stable Diffusion v1.4 generations from MemBench trigger prompts, above the verified memorized images they reproduce.}
\label{fig:sheet}
\end{figure}

We train small denoising diffusion models \citep{ho2020ddpm} (a 3.9M-parameter U-Net, $32\times32$, $1{,}000$-step linear schedule, DDIM sampling with $100$ steps \citep{song2021ddim}) on CIFAR-10 subsets \citep{krizhevsky2009learning} of $100$ to $5{,}000$ images and on the full $50{,}000$, for $100{,}000$ steps, with second seeds at $200$ and $1{,}000$ images, a $300{,}000$-step run at $1{,}000$, and two further seeds each at $1{,}000$, $2{,}000$ and $5{,}000$ images. Each model is sampled $500$ times and tested against its training set (capped at $2{,}000$ images) and as many held-out images, at $B=199$: CIFAR-10 test images in Table~\ref{tab:sweep}, and in Table~\ref{tab:specificity} also training images the model never saw. A sample within $r_0$ of a training image in ambient pixel space is a copy, and copies are the ground truth. Synthetic memorizers validate the statistics first (Appendix~\ref{app:synthetic}); every setting is in Appendix~\ref{app:constants}.

\begin{table}[t]
\centering\small\setlength{\tabcolsep}{5pt}
\begin{tabular}{@{}lrlrrrr@{}}
\toprule
training set & $T_{\mathrm{small}}$ & null & $p_{\mathrm{small}}$ & copies (of $500$) & distinct images (NN) & leak count \\
\midrule
$100$              & $+69.6$ & $-0.4\pm5.1$ & $0.005$ & $483$ & $96$ of $100$   & $89$ \\
$200$              & $+60.9$ & $+0.5\pm4.7$ & $0.005$ & $401$ & $128$ of $200$  & $116$ \\
$200$, seed 2      & $+65.1$ & $+0.7\pm4.8$ & $0.005$ & $394$ & $130$ of $200$  & $128$ \\
$500$              & $+11.1$ & $+0.0\pm1.8$ & $0.005$ & $103$ & $74$ of $500$   & $86$ \\
$1{,}000$          & $+0.6$  & $+0.0\pm0.5$ & $0.13$  & $2$   & $2$ of $1{,}000$ & $-6$ \\
$1{,}000$, seed 2  & $-0.4$  & $-0.0\pm0.7$ & $0.68$  & $6$   & $6$ of $1{,}000$ & $-11$ \\
$1{,}000$, $300$k steps & $+1.6$ & $-0.1\pm1.0$ & $0.055$ & $24$ & $21$ of $1{,}000$ & $19$ \\
$2{,}000$          & $+0.3$  & $+0.0\pm0.4$ & $0.32$  & $0$   & $0$ of $2{,}000$ & $10$ \\
$5{,}000$          & $+0.5$  & $-0.0\pm0.5$ & $0.17$  & $0$   & $0$ of $2{,}000$ & $4$ \\
$50{,}000$         & $-0.2$  & $-0.0\pm0.4$ & $0.66$  & $0$   & $0$ of $2{,}000$ & $1$ \\
\bottomrule
\end{tabular}
\caption{Certified memorization along the training-set size ($100$k steps unless stated; training clouds capped at $2{,}000$ images; $B=199$). Copies are samples within a quarter of the training-image spacing of a training image; the leak count is the excess overlap component count over the held-out background (Appendix~\ref{app:diag}); the nearest-neighbour tally of distinct images reproduced is its ground truth.}
\label{tab:sweep}
\end{table}

\paragraph{Detection.} The models trained on $100$ and $200$ images store their training sets, the $500$-image model copies a fifth of its samples and fires at six null standard deviations, and at $1{,}000$ images memorization stops in both seeds (Table~\ref{tab:sweep}, Figure~\ref{fig:sweep}); the full-data model is null on every statistic. Training the $1{,}000$-image model three times longer brings it to $24$ copies of $21$ images and $p=0.055$, a non-rejection with the ground truth marginal. The leak count tracks the tally of distinct images copied to within $16\%$ across the memorizing range: $401$ of the $200$-image model's $500$ samples copy $128$ images, and the count reads $116$ (Figure~\ref{fig:sheet}).

\begin{table}[t]
\centering\footnotesize\setlength{\tabcolsep}{3pt}
\begin{tabular}{@{}llccccc@{}}
\toprule
 & & detected & \multicolumn{2}{c}{rejected, test split} & \multicolumn{2}{c}{rejected, random split} \\
statistic & null & (of $4$) & diffusion ($11$) & other ($13$) & diffusion ($11$) & other ($13$) \\
\midrule
$Z_U$ & normal ref. & $4$ & $7$ & $5$ & $6$ & $4$ \\
\midrule
$T_{\mathrm{NN}}$ & swap & $4$ & $4$ & $3$ & $4$ & $3$ \\
$Z_U$ & swap & $4$ & $4$ & $1$ & $4$ & $1$ \\
$C_T$ & swap & $4$ & $4$ & $2$ & $4$ & $0$ \\
$T_{\mathrm{mass}}$ & swap & $4$ & $0$ & $4$ & $1$ & $1$ \\
$T_{\mathrm{cnt}}$ & swap & $4$ & $0$ & $0$ & $0$ & $0$ \\
$T_{\mathrm{small}}$ & swap & $4$ & $0$ & $0$ & $1$ & $0$ \\
\bottomrule
\end{tabular}
\caption{What a rejection means, on $29$ generators with ground truth: $4$ that reproduce (at least a fifth of samples within $r_0$ of a training image) and $24$ that reproduce essentially nothing ($11$ diffusion models with at most six copies; $13$ autoencoders and adversarial networks with none), at level $0.05$. Statistics as defined in Section~\ref{sec:method}; $C_T$ is the cell-weighted $Z_U$. Held-out images come from CIFAR-10's test split, or from training images the model never saw, a random split that makes the swap null hold by design ($26$ models; the three full-data models have no unused training images and keep the test split). Detection is $4$ of $4$ under both. Under the swap null every rejection of a non-reproducer is a correct rejection of that null, and none is reproduction.}
\label{tab:specificity}
\end{table}

\paragraph{Specificity.} The same code ran unmodified on thirteen variational autoencoders and adversarial networks trained on $50$ to $50{,}000$ images (Appendix~\ref{app:nondiff}), for $29$ generators in all. Four reproduce, with at least a fifth of their samples copies; twenty-four reproduce essentially nothing---the diffusion models from $1{,}000$ images up, with at most six copies, and every autoencoder and adversarial network, with none---and the $300$k-step run is set aside as marginal. Under the exact null every statistic detects the four, with either held-out set (Table~\ref{tab:specificity}). Of the twenty-four, with the random split, the nearest-neighbour fraction rejects on $7$ (four diffusion models, three others), $Z_U$ on $5$, the all-scale mass on $2$, the thresholded count on none, and $T_{\mathrm{small}}$ on one, the $5{,}000$-image diffusion model at $p=0.02$. Paired against the nearest-neighbour fraction, the count gives seven discordant pairs to none, McNemar $p=0.016$, under both designs; $T_{\mathrm{small}}$ gives seven to none with the test split and seven to one with the random split ($p=0.070$). One rejection among $24$ true nulls is what a level-$0.05$ test is expected to produce ($1.2$ on average), so the random-split count is nominal behaviour, not a loss of specificity. Those rejections are correct against the stated null, since an overfit model does depend on its training images, but none is reproduction: only copying puts samples on training images. Read against its normal reference instead, $Z_U$ flags $10$ of the $24$ ($12$ with the test split), because its mean under relabelling runs from $-10.8$ on the $200$-image diffusion models to $+14.8$ on an adversarial network---Proposition~\ref{prop:meehan} at scale. Varying the cutoff from $0.15$ to $0.5$ of the spacing on the six marginal generators changes the count's decision on three and the mass's on two, and only the mass detects the $500$-image reproducer at every cutoff (Appendix~\ref{sec:baseline}). Specificity is bought by the scale, and a count at that scale is as specific as the topological statistic or more; what the topological statistic adds is steadiness in the cutoff, the component count and the three-fold form.

\paragraph{Power.} How many copies must a model make to be seen? Starting from the $5{,}000$-image model, which copies nothing, we replace $k$ of its $500$ samples by copies of $k$ distinct training images plus Gaussian noise at $0.05$ or $0.2$ of the training-image spacing, and test against $1{,}000$ of its training images and $1{,}000$ it never saw, five replicates per cell at $B=199$. At $k=10$, two percent of the samples, $T_{\mathrm{small}}$ and the thresholded count reject in every replicate at both noise levels, and at $k=5$ $T_{\mathrm{small}}$ rejects in three of five with near-exact copies and five of five at the coarser noise, the count in four and three. The nearest-neighbour fraction and $Z_U$ reach four or five of five only at $k=50$, and the all-scale mass is unreliable throughout. Restricting the statistic to the near-duplicate scale buys power against copying as well as specificity against overfitting.

\section{Auditing single images}\label{sec:sd}

\paragraph{Data.} MemBench \citep{hong2024membench} provides $3{,}000$ prompts triggering memorized images in Stable Diffusion v1.4 \citep{rombach2022ldm}, verified by reverse image search; $61$ source images remain retrievable ($1{,}826$ prompts). Controls are $2{,}480$ live images of the most-duplicated LAION-2B captions \citep{webster2023reproducible,schuhmann2022laion}, training images of the same genre that are not memorized. Generations use DDIM, $50$ steps, guidance $7.5$ and prompt-fixed seeds: one for each of $2{,}396$ control captions, of which $2{,}365$ are stored at the $128$-pixel resolution the embedding-space audit uses, and, for nulls correlated as a memorized image's generations are, $61$ captions at the memorized counts $n_i$ and $300$ at $12$ each.

\paragraph{The model-level test first.} Against the $61$ memorized images and $61$ random controls, over five control draws at $B=199$, the $1{,}826$ generations from memorized prompts fire in every draw at $64$ and $32$ pixels ($p_{\mathrm{small}}=p_{\mathrm{mass}}=0.005$, the smallest attainable), and at $16$ and $8$ pixels ($p_{\mathrm{small}}\le0.010$); the nearest-neighbour fraction fires as well. Only $39$ generations ($2.1\%$) are pixel-level copies, of $5$ images: memorization here is mostly layout-level. The control arm is a diagnostic of the exchangeability assumption: generations from the $2{,}396$ control captions are null on $T_{\mathrm{small}}$, $T_{\mathrm{mass}}$ and $T_{\mathrm{NN}}$ at both resolutions ($p_{\mathrm{small}}$ $0.17$ to $0.66$, $p_{\mathrm{NN}}$ from $0.09$ to $0.46$), and $Z_U$ rejects there. The statistics respond differently to whatever mismatch exists between the control set and the memorized images, and one built on held-out-to-training distances is the most sensitive to it (Appendix~\ref{sec:baseline}); three non-rejections do not validate the assumption. On this model the model-level tests are exact under an assumption that one statistic contradicts, and the image-level audit below does not rest on it.

\paragraph{The image-level audit.} For memorized image $x_i$ with its $n_i$ trigger-prompt generations (median $n_i=21$), the statistic is the median SSCD similarity \citep{pizzi2022sscd} of the generations to $x_i$. Embedding space is the right place for it, since it sees layout-level copies, and the wrong place for the topological statistic, since SSCD distances concentrate (scale window $0.04$ against $2.6$ for pixels; Appendix~\ref{app:space}).

\emph{Failure one: semantic leakage.} The natural test ranks $x_i$ among random images by similarity to its own generations. A generation from a non-memorized caption ranks its own source above random images far more often than chance ($38\%$ of rank $p$-values below $0.05$), because the caption describes the image, and the design returns $596$ discoveries among $2{,}365$ controls. The valid null compares like with like: generations of non-memorized captions against their own sources (median similarity $0.052$).

\emph{Failure two: false concentration.} Resampling $n_i$ values independently from a pool of single generations of many control images gives $47$ discoveries. The pool mixes captions, so a median of values resampled from it loses the variation \emph{between} images, which remains however often one caption is generated; this happens even if generations are independent given the caption, and correlation among one caption's generations adds to it. The median of $21$ values resampled from the pool has spread (standard deviation) $0.016$, while the median of $n_i$ generations of one control caption, the correct null, has spread $0.050$. With $61$ such controls, Benjamini--Hochberg at $q=0.05$ certifies $32$, and the floor $1/62$ is what stops it; the remedy is more captions, not more generations of each (Appendix~\ref{app:floor}). With $300$ captions at $12$ generations each the audit certifies $36$ of $61$ (Figure~\ref{fig:audit}), gaining four images and losing none, and Storey's adaptive procedure certifies $44$ ($\hat\pi_0=0.29$ at $\lambda=150/301$), both covered by \citet[Cor.~1, Thm.~3]{bates2023conformal}. Held out as negatives against the other $239$, $61$ controls yield a discovery in $0.01\%$ of $20{,}000$ random splits (Appendix~\ref{app:floor}). The $25$ images this audit declines all lie below the largest control value; Storey's procedure takes eight of them, and a calibrated maximum more (below).

\paragraph{How much an audit needs.} Subsampling the committed embeddings, $R=50$ times per cell, answers what an auditor must generate (Appendix~\ref{app:floor}). One generation per image certifies $30.7\pm2.9$ images (mean and standard deviation over the subsamples), two certify $37.0\pm2.6$, and more add nothing; $100$ control captions certify $35.0\pm2.5$ where $300$ certify $36$, and held-out controls yield no discovery in any configuration. Limiting memorized images and controls alike to $12$ generations, so that both statistics are medians over the same number, certifies $35.4\pm1.3$ by Benjamini--Hochberg and $42.4$ by Storey's procedure, against $36$ and $44$. That shows the conclusions are robust to the unequal generation counts; it does not show that candidate and control statistics are exchangeable. The budgets describe this benchmark and model and are not a general prescription.

\begin{figure}[t]
\centering\includegraphics[width=0.74\textwidth]{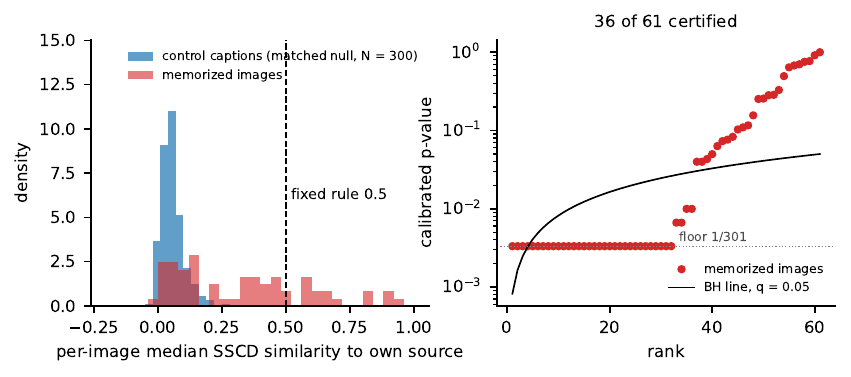}
\caption{The calibrated audit of Stable Diffusion v1.4. Left: per-image median SSCD similarity of a caption's generations to its own source image, for the $300$ control captions (the matched null) and the $61$ memorized images; the benchmark's fixed rule is the dashed line. Right: the per-image calibrated $p$-values against the Benjamini--Hochberg line at $q=0.05$; the floor at $1/301$ is the null's size.}
\label{fig:audit}
\end{figure}

\paragraph{Against the fixed rule.} The benchmark's own-source rule, a generation above similarity $0.5$, flags $39$ images, and one control generation of $2{,}365$ also clears it. Paired on the same $61$ images the two differ not in count---$30$ flagged by both, $16$ by neither, $9$ by the rule alone and $6$ by the audit alone, McNemar $p=0.61$---but in two things at once: the rule is uncalibrated, and it reads a maximum where the audit reads a median. A maximum detects occasional strong reproduction, a median typical similarity. To separate the two we calibrate the maximum itself. At a matched budget of $12$ generations per image and per control, the maximum similarity ranked against the controls' maxima certifies $46.3\pm1.3$ of $61$ over $200$ subsamples, against $35.4\pm1.3$ for the median at the same budget; it takes all $6$ of the audit's own images, about $7$ of the rule's $9$ and $3$ of the $16$ that neither flags. The rule's nine, with median similarity $0.122$ and a median $7\%$ of generations above $0.5$, are therefore mostly real occasional copies. What the rule lacks is an error rate and a matched budget, not the right statistic: calibrated, the maximum audits occasional copying and the median typical copying. We report the median throughout and the calibrated maximum as its complement.

\section{What a correct null changes}\label{sec:changes}

\begin{figure}[t]
\centering\includegraphics[width=0.78\textwidth]{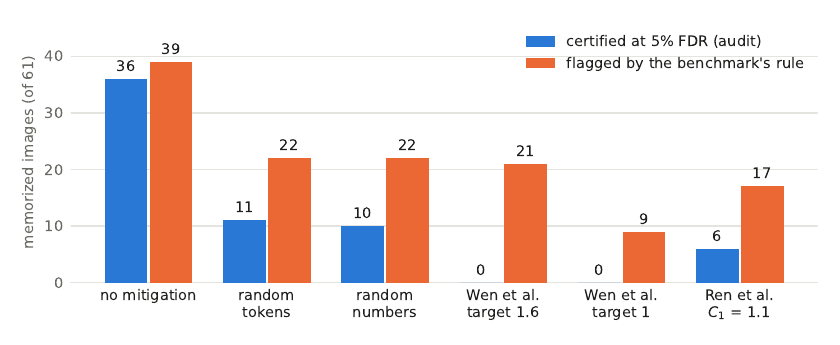}
\caption{Scoring MemBench's mitigations on Stable Diffusion v1.4. Blue: memorized images the audit certifies at $5\%$ false-discovery rate, each arm against a control null generated under the same mitigation. Orange: images the benchmark's rule flags, a generation above similarity $0.5$. By the rule, mitigation roughly halves the flagged images; under the audit, two thirds to all of the certified images are no longer detected.}
\label{fig:mitigation}
\end{figure}

\paragraph{Scoring a mitigation.} MemBench exists to score mitigations. We regenerated the trigger prompts and the $300$-caption null under each of its mitigations---four random tokens or numbers inserted into the prompt \citep{somepalli2023understanding}, the embedding optimization of \citet{wen2024detecting} at the benchmark's target and a stronger one, and the cross-attention rescaling of \citet{ren2024unveiling}---and audited each (Figure~\ref{fig:mitigation}, Appendix~\ref{app:mitigation}). Of the $36$ certified images, the audit still certifies $11$ under random tokens, $10$ under random numbers, $6$ under Ren et al.'s attention rescaling, and none under Wen's method at either target; no image is gained. A lost certification is not itself evidence of a decrease, so we test the decrease directly: the median own-similarity of every one of the $36$ falls under every mitigation (Wilcoxon signed-rank $p=1.5\times10^{-11}$ in each arm, median falls of $0.26$ to $0.43$), while the same control captions regenerated under the mitigation move by less than $0.01$ (difference in differences, Mann--Whitney $p<10^{-19}$ in every arm). The two prompt perturbations keep $9$ of the same images, whose similarity beforehand matched that of the images lost, so under these two perturbations survival is not explained by prior strength. The pixel-level test rejects in $1$, $5$, $2$, $1$ and $1$ of five control draws under random tokens, random numbers, the two Wen targets and Ren's rescaling, against $5$ of $5$ before. The benchmark's rule still flags $22$, $22$, $21$, $9$ and $17$ images, on a median $4.5$ to $15\%$ of their generations. Zero certifications mean that the audit no longer detects typical reproduction at this level, not that occasional copies are gone; those are the province of the calibrated maximum of Section~\ref{sec:sd}.

\paragraph{The unit of an audit.} Stable Diffusion~2 copies \emph{templates}, a layout shared by many training images \citep{webster2023reproducible}. With $164$ source images recovered by caption matching for the $219$ memorized prompts of \citet{ren2024unveiling}, the own-source audit certifies none, although $1{,}184$ of $1{,}576$ generations copy \emph{some} candidate. Keyed to the template instead---each prompt's generations against all candidates, with the null computed the same way---the audit selects $189$ of $197$ prompts, and $27$ of $56$ template families against family-specific nulls, at the $5\%$ level, under an exchangeability that a generic-caption null does not enforce (Appendix~\ref{app:sd2}). An audit keyed to the wrong unit certifies nothing, and a caption-matched candidate list is a hypothesis that the audit verifies.

\paragraph{Shared memorization.} Do two models reproduce the \emph{same} images? Given how many images each copies, the overlap expected by chance is hypergeometric: the two diffusion seeds trained on the same $200$ images share $109$ copied images against $83$ expected, $p=2\times10^{-15}$. The three-fold profile gives a test that is exact under the global null that training and held-out images are exchangeable given both models. Against the narrower hypothesis of no \emph{shared} copies it is a statistic whose selectivity we check empirically: it fires on that pair at $11.5$ null standard deviations and is null ($p=0.23$) on two memorizers trained on nearly disjoint subsets, where both pairwise tests fire (Appendix~\ref{app:kfold}). Stable Diffusion v1.4 and v1.5 share $36$ of the $39$ and $40$ images the rule flags for each, and the audit certifies $36$ and $35$ of $61$ against each model's own matched null, $32$ in common; RealisticVision 5.1, a community fine-tune of v1.5, certifies $30$, of which $29$ are among v1.4's $36$ (Appendix~\ref{app:v15}). Memorization persists across the seeds and related checkpoints examined. Only the two DDPM seeds separate the data from the run: v1.5 continues v1.4's training and RealisticVision fine-tunes v1.5, so those overlaps may be inherited with the weights.

\section{Limitations and conclusion}\label{sec:conclusion}

The audit needs the training images and certifies a candidate list: \emph{discoverable} memorization, the training-prefix notion of \citet{carlini2023quantifying} as named by \citet{nasr2023scalable}, a lower bound on what is there. The model-level test is exact by design only when the held-out images are a random split; with matched controls, as on Stable Diffusion, exchangeability is an assumption for which the control arm is a diagnostic, not a guarantee, and the image-level evidence rests on one benchmark of $61$ images and three related checkpoints. The image-level guarantee is marginal over the calibration set, with a $1/(N+1)$ floor; calibration-conditional $p$-values \citep{bates2023conformal} would give a stronger guarantee at a cost in power. A statistic restricted to near-duplicates is silent on resemblance, which an embedding similarity sees at the price of the semantic calibration measured here. The topological statistic costs one Alpha-complex sweep per relabelling, capping clouds at $2{,}000$ images; its readouts are diffusion-specific (Appendix~\ref{sec:leakscale}); and its gain over a thresholded count is steadiness, counting and a $k$-fold form, not specificity.

The null makes a statistic a test; the near-duplicate scale makes a test one of reproduction rather than of overfitting; and measuring semantic leakage and false concentration is what makes an image-level null correct.

\section*{Reproducibility statement}
All experiments run from one code base: the invariant is computed with the GUDHI Alpha complex, the DDPMs are a $3.9$M-parameter U-Net trained on fixed CIFAR-10 subsets with recorded indices, Stable Diffusion generations use fixed per-prompt seeds, and every $p$-value in the paper is reproduced by a single command per table. The MemBench prompts and image URLs and the LAION control table are public; we release the fetch scripts, the image identifiers, the Stable Diffusion generations at $32$ pixels, and all measurement outputs. The code, the fetch scripts, and every measurement output behind the numbers above are released at \url{https://github.com/sushovan4/memorization-audits} ($166$ measurement records, $247$ files in all); the generated arrays that regenerate deterministically from the fixed seeds are not shipped, and the scripts that regenerate them are.

\section*{Ethics statement}
The memorized images in the audit are training images that a public model reproduces; they were located by the benchmark's authors through reverse image search and include product photographs and a small number of portraits. We do not redistribute them, only their public URLs and low-resolution generated arrays, and the contact sheet shows images already published by the benchmark. The audit is a tool for finding such reproduction, which is the harm it is meant to expose. It is also, like every memorization detector, dual-use: a procedure that confirms which training images a model reproduces is a step in extracting them, and the readouts name the images. We judge the balance favourable because the method needs the training set, so it tells an owner what an outsider could already obtain by extraction, and because the alternative---releasing models with unmeasured reproduction---is the status quo the literature was written against. The LAION control captions are fetched by public URL and we redistribute no images.

\bibliographystyle{plainnat}
\bibliography{refs}

\begin{thebibliography}{55}
\providecommand{\natexlab}[1]{#1}
\providecommand{\url}[1]{\texttt{#1}}
\expandafter\ifx\csname urlstyle\endcsname\relax
  \providecommand{\doi}[1]{doi: #1}\else
  \providecommand{\doi}{doi: \begingroup \urlstyle{rm}\Url}\fi

\bibitem[Alaa et~al.(2022)Alaa, van Breugel, Saveliev, and van~der
  Schaar]{alaa2022faithful}
Ahmed Alaa, Boris van Breugel, Evgeny~S. Saveliev, and Mihaela van~der Schaar.
\newblock How faithful is your synthetic data? sample-level metrics for
  evaluating and auditing generative models.
\newblock In \emph{International Conference on Machine Learning (ICML)}, pages
  290--306, 2022.

\bibitem[Bagchi et~al.(2026)Bagchi, Majhi, Mitra, and
  Virk]{bagchi2026inference}
Pramita Bagchi, Sushovan Majhi, Atish Mitra, and {\v Z}iga Virk.
\newblock Statistical inference for persistence diagrams via landmark
  embeddings: minimax theory and finite approximation, 2026.
\newblock arXiv:2609.07691.

\bibitem[Barannikov et~al.(2021)Barannikov, Trofimov, Sotnikov, Trimbach,
  Korotin, Filippov, and Burnaev]{barannikov2021manifold}
Serguei Barannikov, Ilya Trofimov, Grigorii Sotnikov, Ekaterina Trimbach,
  Alexander Korotin, Alexander Filippov, and Evgeny Burnaev.
\newblock Manifold topology divergence: a framework for comparing data
  manifolds.
\newblock In \emph{Advances in Neural Information Processing Systems},
  volume~34, pages 7294--7305, 2021.

\bibitem[Bates et~al.(2023)Bates, Cand{\`e}s, Lei, Romano, and
  Sesia]{bates2023conformal}
Stephen Bates, Emmanuel Cand{\`e}s, Lihua Lei, Yaniv Romano, and Matteo Sesia.
\newblock Testing for outliers with conformal $p$-values.
\newblock \emph{The Annals of Statistics}, 51\penalty0 (1):\penalty0 149--178,
  2023.

\bibitem[Benjamini and Hochberg(1995)]{benjamini1995controlling}
Yoav Benjamini and Yosef Hochberg.
\newblock Controlling the false discovery rate: a practical and powerful
  approach to multiple testing.
\newblock \emph{Journal of the Royal Statistical Society: Series B},
  57\penalty0 (1):\penalty0 289--300, 1995.

\bibitem[Benjamini and Yekutieli(2001)]{benjamini2001control}
Yoav Benjamini and Daniel Yekutieli.
\newblock The control of the false discovery rate in multiple testing under
  dependency.
\newblock \emph{The Annals of Statistics}, 29\penalty0 (4):\penalty0
  1165--1188, 2001.

\bibitem[Bhattacharjee et~al.(2023)Bhattacharjee, Dasgupta, and
  Chaudhuri]{bhattacharjee2023datacopying}
Robi Bhattacharjee, Sanjoy Dasgupta, and Kamalika Chaudhuri.
\newblock Data-copying in generative models: a formal framework.
\newblock In \emph{International Conference on Machine Learning (ICML)}, pages
  2364--2396, 2023.

\bibitem[Carlini et~al.(2022)Carlini, Chien, Nasr, Song, Terzis, and
  Tram{\`e}r]{carlini2022membership}
Nicholas Carlini, Steve Chien, Milad Nasr, Shuang Song, Andreas Terzis, and
  Florian Tram{\`e}r.
\newblock Membership inference attacks from first principles.
\newblock In \emph{IEEE Symposium on Security and Privacy}, pages 1897--1914,
  2022.

\bibitem[Carlini et~al.(2023{\natexlab{a}})Carlini, Hayes, Nasr, Jagielski,
  Sehwag, Tram{\`e}r, Balle, Ippolito, and Wallace]{carlini2023extracting}
Nicholas Carlini, Jamie Hayes, Milad Nasr, Matthew Jagielski, Vikash Sehwag,
  Florian Tram{\`e}r, Borja Balle, Daphne Ippolito, and Eric Wallace.
\newblock Extracting training data from diffusion models.
\newblock In \emph{32nd USENIX Security Symposium}, pages 5253--5270,
  2023{\natexlab{a}}.

\bibitem[Carlini et~al.(2023{\natexlab{b}})Carlini, Ippolito, Jagielski, Lee,
  Tram{\`e}r, and Zhang]{carlini2023quantifying}
Nicholas Carlini, Daphne Ippolito, Matthew Jagielski, Katherine Lee, Florian
  Tram{\`e}r, and Chiyuan Zhang.
\newblock Quantifying memorization across neural language models.
\newblock In \emph{International Conference on Learning Representations
  (ICLR)}, 2023{\natexlab{b}}.

\bibitem[D{\l}otko and Gurnari(2023)]{dlotko2023euler}
Pawe{\l} D{\l}otko and Davide Gurnari.
\newblock {Euler} characteristic curves and profiles: a stable shape invariant
  for big data problems.
\newblock \emph{GigaScience}, 12:\penalty0 giad094, 2023.

\bibitem[Dockhorn et~al.(2023)Dockhorn, Cao, Vahdat, and
  Kreis]{dockhorn2023dpdiffusion}
Tim Dockhorn, Tianshi Cao, Arash Vahdat, and Karsten Kreis.
\newblock Differentially private diffusion models.
\newblock \emph{Transactions on Machine Learning Research}, 2023.

\bibitem[Duan et~al.(2023)Duan, Kong, Wang, Shi, and Xu]{duan2023diffusionmia}
Jinhao Duan, Fei Kong, Shiqi Wang, Xiaoshuang Shi, and Kaidi Xu.
\newblock Are diffusion models vulnerable to membership inference attacks?
\newblock In \emph{International Conference on Machine Learning (ICML)}, pages
  8717--8730, 2023.

\bibitem[Edelsbrunner(1995)]{edelsbrunner1995union}
Herbert Edelsbrunner.
\newblock The union of balls and its dual shape.
\newblock \emph{Discrete \& Computational Geometry}, 13\penalty0
  (3--4):\penalty0 415--440, 1995.

\bibitem[Edelsbrunner and Harer(2010)]{edelsbrunner2010computational}
Herbert Edelsbrunner and John~L. Harer.
\newblock \emph{Computational Topology: An Introduction}.
\newblock American Mathematical Society, 2010.

\bibitem[Edelsbrunner and M{\"u}cke(1994)]{edelsbrunner1994alpha}
Herbert Edelsbrunner and Ernst~P. M{\"u}cke.
\newblock Three-dimensional alpha shapes.
\newblock \emph{ACM Transactions on Graphics}, 13\penalty0 (1):\penalty0
  43--72, 1994.

\bibitem[Gu et~al.(2025)Gu, Du, Pang, Li, Lin, and Wang]{gu2023memorization}
Xiangming Gu, Chao Du, Tianyu Pang, Chongxuan Li, Min Lin, and Ye~Wang.
\newblock On memorization in diffusion models.
\newblock \emph{Transactions on Machine Learning Research}, 2025.
\newblock arXiv:2310.02664.

\bibitem[Ho et~al.(2020)Ho, Jain, and Abbeel]{ho2020ddpm}
Jonathan Ho, Ajay Jain, and Pieter Abbeel.
\newblock Denoising diffusion probabilistic models.
\newblock In \emph{Advances in Neural Information Processing Systems},
  volume~33, pages 6840--6851, 2020.

\bibitem[Hong et~al.(2025)Hong, Oh, and Sung]{hong2024membench}
Chunsan Hong, Tae-Hyun Oh, and Minhyuk Sung.
\newblock {MemBench}: Memorized image trigger prompt dataset for diffusion
  models.
\newblock \emph{Transactions on Machine Learning Research}, 2025.
\newblock arXiv:2407.17095.

\bibitem[Huang et~al.(2024)Huang, Gong, and Reiter]{huang2024general}
Zonghao Huang, Neil~Zhenqiang Gong, and Michael~K. Reiter.
\newblock A general framework for data-use auditing of {ML} models.
\newblock In \emph{ACM SIGSAC Conference on Computer and Communications
  Security (CCS)}, pages 1300--1314, 2024.

\bibitem[Huang et~al.(2025)Huang, Gong, and Reiter]{huang2025instance}
Zonghao Huang, Neil~Zhenqiang Gong, and Michael~K. Reiter.
\newblock Instance-level data-use auditing of visual {ML} models.
\newblock \emph{arXiv preprint arXiv:2503.22413}, 2025.

\bibitem[Kadkhodaie et~al.(2024)Kadkhodaie, Guth, Simoncelli, and
  Mallat]{kadkhodaie2024}
Zahra Kadkhodaie, Florentin Guth, Eero~P. Simoncelli, and St{\'e}phane Mallat.
\newblock Generalization in diffusion models arises from geometry-adaptive
  harmonic representations.
\newblock In \emph{International Conference on Learning Representations
  (ICLR)}, 2024.

\bibitem[Kamb and Ganguli(2025)]{kamb2024}
Mason Kamb and Surya Ganguli.
\newblock An analytic theory of creativity in convolutional diffusion models.
\newblock In \emph{International Conference on Machine Learning (ICML)}, pages
  28795--28831, 2025.
\newblock arXiv:2412.20292.

\bibitem[Kandpal et~al.(2022)Kandpal, Wallace, and
  Raffel]{kandpal2022deduplicating}
Nikhil Kandpal, Eric Wallace, and Colin Raffel.
\newblock Deduplicating training data mitigates privacy risks in language
  models.
\newblock In \emph{International Conference on Machine Learning (ICML)}, pages
  10697--10707, 2022.

\bibitem[Kawamura et~al.(2026)Kawamura, Majhi, and
  Mitra]{kawamura2026intersection}
Kazuhiro Kawamura, Sushovan Majhi, and Atish Mitra.
\newblock The {Intersection Euler Characteristic Profile}: {Euler} calculus and
  stability for topological interaction of ball unions, 2026.
\newblock arXiv:2608.06180.

\bibitem[Khrulkov and Oseledets(2018)]{khrulkov2018geometry}
Valentin Khrulkov and Ivan Oseledets.
\newblock Geometry score: A method for comparing generative adversarial
  networks.
\newblock In \emph{International Conference on Machine Learning}, pages
  2621--2629. PMLR, 2018.

\bibitem[Kong et~al.(2024)Kong, Duan, Ma, Shen, Zhu, Shi, and
  Xu]{kong2024diffusionmia}
Fei Kong, Jinhao Duan, RuiPeng Ma, Heng~Tao Shen, Xiaofeng Zhu, Xiaoshuang Shi,
  and Kaidi Xu.
\newblock An efficient membership inference attack for the diffusion model by
  proximal initialization.
\newblock In \emph{International Conference on Learning Representations
  (ICLR)}, 2024.

\bibitem[Krizhevsky(2009)]{krizhevsky2009learning}
Alex Krizhevsky.
\newblock Learning multiple layers of features from tiny images.
\newblock Technical report, University of Toronto, 2009.

\bibitem[Kynk{\"a}{\"a}nniemi et~al.(2019)Kynk{\"a}{\"a}nniemi, Karras, Laine,
  Lehtinen, and Aila]{kynkaanniemi2019improved}
Tuomas Kynk{\"a}{\"a}nniemi, Tero Karras, Samuli Laine, Jaakko Lehtinen, and
  Timo Aila.
\newblock Improved precision and recall metric for assessing generative models.
\newblock In \emph{Advances in Neural Information Processing Systems},
  volume~32, 2019.

\bibitem[Lehmann and Romano(2005)]{lehmann2005testing}
Erich~L. Lehmann and Joseph~P. Romano.
\newblock \emph{Testing Statistical Hypotheses}.
\newblock Springer, 3 edition, 2005.

\bibitem[Maini et~al.(2021)Maini, Yaghini, and Papernot]{maini2021dataset}
Pratyush Maini, Mohammad Yaghini, and Nicolas Papernot.
\newblock Dataset inference: Ownership resolution in machine learning.
\newblock In \emph{International Conference on Learning Representations
  (ICLR)}, 2021.

\bibitem[Maini et~al.(2024)Maini, Jia, Papernot, and Dziedzic]{maini2024llm}
Pratyush Maini, Hengrui Jia, Nicolas Papernot, and Adam Dziedzic.
\newblock {LLM} dataset inference: Did you train on my dataset?
\newblock In \emph{Advances in Neural Information Processing Systems},
  volume~37, 2024.

\bibitem[Majhi(2026)]{majhi2026disentangle}
Sushovan Majhi.
\newblock Certified topological interaction in neural representations: exact
  tests and the statistic they require, 2026.
\newblock arXiv:2609.08561.

\bibitem[Maria et~al.(2014)Maria, Boissonnat, Glisse, and
  Yvinec]{maria2014gudhi}
Cl{\'e}ment Maria, Jean-Daniel Boissonnat, Marc Glisse, and Mariette Yvinec.
\newblock The {G}udhi library: simplicial complexes and persistent homology.
\newblock In \emph{International Congress on Mathematical Software (ICMS)},
  volume 8592 of \emph{Lecture Notes in Computer Science}, pages 167--174,
  2014.

\bibitem[Meehan et~al.(2020)Meehan, Chaudhuri, and
  Dasgupta]{meehan2020nonparametric}
Casey Meehan, Kamalika Chaudhuri, and Sanjoy Dasgupta.
\newblock A non-parametric test to detect data-copying in generative models.
\newblock In \emph{International Conference on Artificial Intelligence and
  Statistics (AISTATS)}, pages 3546--3556, 2020.

\bibitem[Naeem et~al.(2020)Naeem, Oh, Uh, Choi, and Yoo]{naeem2020reliable}
Muhammad~Ferjad Naeem, Seong~Joon Oh, Youngjung Uh, Yunjey Choi, and Jaejun
  Yoo.
\newblock Reliable fidelity and diversity metrics for generative models.
\newblock In \emph{International Conference on Machine Learning (ICML)}, pages
  7176--7185, 2020.

\bibitem[Nasr et~al.(2023)Nasr, Carlini, Hayase, Jagielski, Cooper, Ippolito,
  Choquette-Choo, Wallace, Tram{\`e}r, and Lee]{nasr2023scalable}
Milad Nasr, Nicholas Carlini, Jonathan Hayase, Matthew Jagielski, A.~Feder
  Cooper, Daphne Ippolito, Christopher~A. Choquette-Choo, Eric Wallace, Florian
  Tram{\`e}r, and Katherine Lee.
\newblock Scalable extraction of training data from (production) language
  models, 2023.
\newblock arXiv:2311.17035.

\bibitem[Pizzi et~al.(2022)Pizzi, Roy, Ravindra, Goyal, and
  Douze]{pizzi2022sscd}
Ed~Pizzi, Sreya~Dutta Roy, Sugosh~Nagavara Ravindra, Priya Goyal, and Matthijs
  Douze.
\newblock A self-supervised descriptor for image copy detection.
\newblock In \emph{IEEE/CVF Conference on Computer Vision and Pattern
  Recognition (CVPR)}, pages 14532--14542, 2022.

\bibitem[Ren et~al.(2024)Ren, Li, Zeng, Xu, Lyu, Xing, and
  Tang]{ren2024unveiling}
Jie Ren, Yaxin Li, Shenglai Zeng, Han Xu, Lingjuan Lyu, Yue Xing, and Jiliang
  Tang.
\newblock Unveiling and mitigating memorization in text-to-image diffusion
  models through cross attention.
\newblock In \emph{European Conference on Computer Vision (ECCV)}, pages
  340--356, 2024.

\bibitem[Rombach et~al.(2022)Rombach, Blattmann, Lorenz, Esser, and
  Ommer]{rombach2022ldm}
Robin Rombach, Andreas Blattmann, Dominik Lorenz, Patrick Esser, and Bj{\"o}rn
  Ommer.
\newblock High-resolution image synthesis with latent diffusion models.
\newblock In \emph{IEEE/CVF Conference on Computer Vision and Pattern
  Recognition (CVPR)}, pages 10684--10695, 2022.

\bibitem[Ross et~al.(2025)Ross, Kamkari, Wu, Hosseinzadeh, Liu, Stein,
  Cresswell, and Loaiza-Ganem]{ross2025geometric}
Brendan~Leigh Ross, Hamidreza Kamkari, Tongzi Wu, Rasa Hosseinzadeh, Zhaoyan
  Liu, George Stein, Jesse~C. Cresswell, and Gabriel Loaiza-Ganem.
\newblock A geometric framework for understanding memorization in generative
  models.
\newblock In \emph{International Conference on Learning Representations}, 2025.
\newblock arXiv:2411.00113.

\bibitem[Sajjadi et~al.(2018)Sajjadi, Bachem, Lucic, Bousquet, and
  Gelly]{sajjadi2018assessing}
Mehdi S.~M. Sajjadi, Olivier Bachem, Mario Lucic, Olivier Bousquet, and Sylvain
  Gelly.
\newblock Assessing generative models via precision and recall.
\newblock In \emph{Advances in Neural Information Processing Systems},
  volume~31, 2018.

\bibitem[Schuhmann et~al.(2022)Schuhmann, Beaumont, Vencu, Gordon, Wightman,
  Cherti, Coombes, Katta, Mullis, Wortsman, Schramowski, Kundurthy, Crowson,
  Schmidt, Kaczmarczyk, and Jitsev]{schuhmann2022laion}
Christoph Schuhmann, Romain Beaumont, Richard Vencu, Cade Gordon, Ross
  Wightman, Mehdi Cherti, Theo Coombes, Aarush Katta, Clayton Mullis, Mitchell
  Wortsman, Patrick Schramowski, Srivatsa Kundurthy, Katherine Crowson, Ludwig
  Schmidt, Robert Kaczmarczyk, and Jenia Jitsev.
\newblock {LAION-5B}: An open large-scale dataset for training next generation
  image-text models.
\newblock In \emph{Advances in Neural Information Processing Systems, Datasets
  and Benchmarks Track}, volume~35, pages 25278--25294, 2022.

\bibitem[Somepalli et~al.(2023{\natexlab{a}})Somepalli, Singla, Goldblum,
  Geiping, and Goldstein]{somepalli2023diffusion}
Gowthami Somepalli, Vasu Singla, Micah Goldblum, Jonas Geiping, and Tom
  Goldstein.
\newblock Diffusion art or digital forgery? investigating data replication in
  diffusion models.
\newblock In \emph{IEEE/CVF Conference on Computer Vision and Pattern
  Recognition (CVPR)}, pages 6048--6058, 2023{\natexlab{a}}.

\bibitem[Somepalli et~al.(2023{\natexlab{b}})Somepalli, Singla, Goldblum,
  Geiping, and Goldstein]{somepalli2023understanding}
Gowthami Somepalli, Vasu Singla, Micah Goldblum, Jonas Geiping, and Tom
  Goldstein.
\newblock Understanding and mitigating copying in diffusion models.
\newblock In \emph{Advances in Neural Information Processing Systems},
  volume~36, pages 47783--47803, 2023{\natexlab{b}}.

\bibitem[Song et~al.(2021)Song, Meng, and Ermon]{song2021ddim}
Jiaming Song, Chenlin Meng, and Stefano Ermon.
\newblock Denoising diffusion implicit models.
\newblock In \emph{International Conference on Learning Representations}, 2021.

\bibitem[Steinke et~al.(2023)Steinke, Nasr, and Jagielski]{steinke2023privacy}
Thomas Steinke, Milad Nasr, and Matthew Jagielski.
\newblock Privacy auditing with one (1) training run.
\newblock In \emph{Advances in Neural Information Processing Systems},
  volume~36, 2023.

\bibitem[Storey(2002)]{storey2002direct}
John~D. Storey.
\newblock A direct approach to false discovery rates.
\newblock \emph{Journal of the Royal Statistical Society: Series B},
  64\penalty0 (3):\penalty0 479--498, 2002.

\bibitem[Storey et~al.(2004)Storey, Taylor, and Siegmund]{storey2004strong}
John~D. Storey, Jonathan~E. Taylor, and David Siegmund.
\newblock Strong control, conservative point estimation and simultaneous
  conservative consistency of false discovery rates: a unified approach.
\newblock \emph{Journal of the Royal Statistical Society: Series B},
  66\penalty0 (1):\penalty0 187--205, 2004.

\bibitem[Vyas et~al.(2023)Vyas, Kakade, and Barak]{vyas2023provable}
Nikhil Vyas, Sham~M. Kakade, and Boaz Barak.
\newblock On provable copyright protection for generative models.
\newblock In \emph{International Conference on Machine Learning (ICML)}, pages
  35277--35299, 2023.

\bibitem[Wagner et~al.(2026)Wagner, Arustamyan, Wheeler, and
  Bubenik]{wagner2024mixup}
Hubert Wagner, Nickolas Arustamyan, Matthew Wheeler, and Peter Bubenik.
\newblock Mixup barcodes: quantifying geometric-topological interactions
  between point clouds.
\newblock In \emph{42nd International Symposium on Computational Geometry (SoCG
  2026)}, volume 367 of \emph{LIPIcs}, pages 94:1--94:19, 2026.
\newblock \doi{10.4230/LIPIcs.SoCG.2026.94}.

\bibitem[Webster(2023)]{webster2023reproducible}
Ryan Webster.
\newblock A reproducible extraction of training images from diffusion models.
\newblock \emph{arXiv preprint arXiv:2305.08694}, 2023.

\bibitem[Wen et~al.(2024)Wen, Liu, Chen, and Lyu]{wen2024detecting}
Yuxin Wen, Yuchen Liu, Chen Chen, and Lingjuan Lyu.
\newblock Detecting, explaining, and mitigating memorization in diffusion
  models.
\newblock In \emph{International Conference on Learning Representations}, 2024.

\bibitem[Zhang et~al.(2025)Zhang, Das, Kamath, and
  Tram{\`e}r]{zhang2025membership}
Jie Zhang, Debeshee Das, Gautam Kamath, and Florian Tram{\`e}r.
\newblock Membership inference attacks cannot prove that a model was trained on
  your data.
\newblock In \emph{IEEE Conference on Secure and Trustworthy Machine Learning
  (SaTML)}, 2025.

\bibitem[Zhao et~al.(2025)Zhao, Qian, Chen, and Huai]{zhao2025miafdr}
Chenxu Zhao, Wei Qian, Aobo Chen, and Mengdi Huai.
\newblock Membership inference attacks with false discovery rate control.
\newblock In \emph{IEEE/CVF International Conference on Computer Vision
  (ICCV)}, pages 1216--1227, 2025.

\end{thebibliography}

\appendix
\section{Proof of Proposition~\ref{prop:swap}}\label{app:swap}
Under $H_0$, conditionally on $G$ and on the unordered pooled set of $2m$ real images, the observed assignment of the labels ``training'' and ``held-out'' is uniform over the $\binom{2m}{m}$ assignments. The $B$ relabellings are drawn uniformly from the same set, independently of the data, so $S(G,T,H)$ and $S(G,T^{\sigma_1},H^{\sigma_1}),\dots,S(G,T^{\sigma_B},H^{\sigma_B})$ are $B+1$ conditionally i.i.d.\ draws of one function of a uniform assignment, hence exchangeable. For exchangeable $(W_0,\dots,W_B)$ the rank $p$-value $(1+\#\{b\ge1:W_b\ge W_0\})/(1+B)$ is at most $\alpha$ with probability at most $\alpha$ \citep[Sec.~15.2]{lehmann2005testing}; the $\ge$ in the count is what keeps the bound under ties, and a statistic constant across relabellings correctly returns $p=1$. Nothing constrains $m$, $n_G$ or $B$. The projection and the scale $r_0$ are functions of the unordered pooled clouds, so they are the same under every relabelling; reusing them therefore evaluates one and the same statistic $S$ on each relabelled dataset, which is all the proposition requires.

\section{Proofs of Propositions~\ref{prop:meehan} and~\ref{prop:leak}}\label{app:proofs}
\paragraph{Proof of Proposition~\ref{prop:meehan}.} Given $G$, the $2m$ real points are i.i.d.\ from $P$ and independent of $G$, so their joint law is invariant under every permutation, and in particular under relabelling which $m$ of them are $T$; Proposition~\ref{prop:swap} applies. For (ii), condition on $T$ and write $n=n_G$ and $L=nm$. The Mann--Whitney count $U=\sum_{i\le n,\,j\le m}\mathbf 1[d(Y_i)>d(X_j)]$ is the statistic of \citet{meehan2020nonparametric}, since ties between $d(Y_i)$ and $d(X_j)$ have probability zero; it has mean $L\Delta_T$, since $Y_i$ and $X_j$ are independent given $T$, and changing one $Y_i$ or one $X_j$ moves $U/L$ by at most $1/n$ or $1/m$; the bounded-differences inequality gives $\Pr[|U/L-\Delta_T|\ge t\mid T]\le2\exp(-2t^2L/(n+m))$, which is Theorem~1 of \citet{meehan2020nonparametric} with the factor $2$ of the two-sided bound, which their statement omits. Now $Z_U=(U/L-1/2)\,L/\sigma_U$ with $L/\sigma_U=\sqrt{12L/(n+m+1)}\to\infty$. Taking $t=(1/2-\Delta_T)/2$, with probability tending to one $Z_U\le-t\sqrt{12L/(n+m+1)}$, which falls below every fixed normal quantile.

The proof, and every profile in the paper, uses the Euler characteristic as a valuation on unions of balls. We state that once, with the hypotheses it needs.
\begin{lemma}[Inclusion--exclusion for ball unions]\label{lem:ie}
Let $A_1,\dots,A_k$ be finite unions of closed balls in $\R^d$. Then
\[
  \chi\Big(\bigcap_{i=1}^k A_i\Big)\;=\;\sum_{\emptyset\neq J\subseteq[k]}(-1)^{|J|+1}\,\chi\Big(\bigcup_{j\in J}A_j\Big),
\]
and if the $A_i$ are pairwise disjoint then $\chi(\bigcup_i A_i)=\sum_i\chi(A_i)$. For a finite $X\subset\R^d$, $\chi(\U(X;r))$ is the Euler characteristic of the Alpha complex of $X$ at radius $r$.
\end{lemma}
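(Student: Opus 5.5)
The plan is to prove the three claims of Lemma~\ref{lem:ie} in reverse order, since the nerve statement supplies the finiteness needed for the other two. \emph{Nerve step.} For finite $X$, $\U(X;r)$ deformation retracts onto the Alpha complex of $X$ at radius $r$. This is the nerve theorem applied to the cover of $\U(X;r)$ by the convex sets $B(x,r)\cap V_x$, with $V_x$ the Voronoi cell of $x$ \citep{edelsbrunner1995union}. Every nonempty intersection of these sets is convex, hence contractible, so $\U(X;r)$ is homotopy equivalent to a finite simplicial complex. Its Euler characteristic is therefore defined, homotopy invariant, and equal to the alternating simplex count of the Alpha complex. More generally, any finite union of closed balls, and any finite intersection of such unions, is a compact semialgebraic set. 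Such sets are finitely triangulable, and I would use the Euler characteristic as the o-minimal (combinatorial) Euler characteristic on these sets.

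\emph{Additivity.} The o-minimal Euler characteristic is a valuation on compact semialgebraic sets: $\chi(A\cup B)=\chi(A)+\chi(B)-\chi(A\cap B)$. For compact polyhedra this also follows from Mayer--Vietoris, by triangulating $A\cup B$ with $A$ and $B$ as subcomplexes and counting simplices. On compact triangulable sets this combinatorial $\chi$ agrees with the homological one, so the two readings used in the paper coincide. The disjoint case is then immediate, because $\chi(\emptyset)=0$.

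\emph{Inclusion--exclusion.} I would prove the displayed formula for intersections by induction on $k$. The case $k=1$ is trivial, and $k=2$ is the valuation identity rearranged: $\chi(A_1\cap A_2)=\chi(A_1)+\chi(A_2)-\chi(A_1\cup A_2)$. For the inductive step, write $\bigcap_{i\le k}A_i=C\cap A_k$ with $C=\bigcap_{i<k}A_i$, so that
\[
\chi\Big(\bigcap_{i\le k}A_i\Big)=\chi(C)+\chi(A_k)-\chi(C\cup A_k).
\]
Distributivity gives $C\cup A_k=\bigcap_{i<k}(A_i\cup A_k)$. The sets $A_i\cup A_k$ are again finite unions of balls, so the inductive hypothesis applies both to $\chi(C)$ and to this intersection of $k-1$ sets. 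After the substitution, the terms indexed by $J\subseteq[k-1]$ and by $J\cup\{k\}$ should combine with the correct signs, giving the sum over all nonempty $J\subseteq[k]$. Equivalently, I would note that the dual formula (a union expressed through intersections) holds for any valuation, and that the intersection form follows by Möbius inversion on the Boolean lattice.

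\emph{Main obstacle.} The subtle point is legitimacy rather than combinatorics. Intersections of ball unions are not unions of balls and need not be homotopy equivalent to an Alpha complex. So $\chi$ of the intersection must be defined intrinsically, which is why I use the semialgebraic framework, and one must check that this intrinsic $\chi$ is homotopy-invariant on compact sets. The bookkeeping in the inductive step is routine but should be written out carefully.
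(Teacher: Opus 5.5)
Your proposal is correct and follows the paper's route. The paper also treats $\chi$ as a valuation on compact definable (here semialgebraic) sets, which gives the $k$-fold inclusion--exclusion and the disjoint case, and it uses the nerve theorem for the Voronoi-restricted balls plus homotopy invariance for the Alpha-complex claim. The only difference is that the paper cites Lemma~A.1 and Theorem~2.7 of \citet{kawamura2026intersection}, whereas you prove the $k$-fold formula directly by induction through $C\cup A_k=\bigcap_{i<k}(A_i\cup A_k)$, whose signs do combine correctly, and you state explicitly that the o-minimal $\chi$ agrees with the homological one on compact sets, which the paper leaves implicit.
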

\begin{proof}
This is Theorem~2.7 and Proposition~6.1 of \citet{kawamura2026intersection}, where $\chi$ is a valuation on definable sets by Euler integration and Lemma~A.1 there gives the $k$-fold inclusion--exclusion; the disjoint case is the valuation identity with an empty intersection. For $k=2$ the identity is $\Dx(r;X,Y)=\chi_X(r)+\chi_Y(r)-\chi_{X\cup Y}(r)$, and for $k$ clouds it expresses $\chi(\bigcap_i\U(X_i;r))$ through the $2^k-1$ pooled curves $\chi(\U(\bigcup_{j\in J}X_j;r))$. The last claim is the nerve theorem for the Alpha complex, whose geometric realisation is homotopy equivalent to $\U(X;r)$ \citep{edelsbrunner1995union,edelsbrunner2010computational}, and $\chi$ is a homotopy invariant.
\end{proof}

\paragraph{Proof of Proposition~\ref{prop:leak}.} Write $P_i=\U(G;r)\cap B(t_i,r)$, so that $\U(G;r)\cap\U(T;r)=\bigcup_i P_i$. By (i) the balls $B(t_i,r)$ are pairwise disjoint, and they are at positive distance from one another; hence so are the $P_i$, and $\chi$ is additive over them. Now $P_i=\bigcup_{g\in G}\big(B(g,r)\cap B(t_i,r)\big)$, and the term indexed by $g$ is nonempty exactly when $\|g-t_i\|\le 2r$, which by (ii) forces $\|g-t_i\|\le r$ and therefore $t_i\in B(g,r)$. Every nonempty term is an intersection of two balls, hence convex, and contains $t_i$; a union of convex sets sharing the point $t_i$ is star-shaped about $t_i$ and so contractible. Thus $\chi(P_i)=1$ when $P_i\neq\emptyset$ and $0$ otherwise, and $P_i\neq\emptyset$ precisely when some $g$ satisfies $\|g-t_i\|\le r$. Summing gives the claim. Both hypotheses are needed: without (i) two training images can share a component and the count merges them, and without (ii) the sets $B(g,r)\cap B(t_i,r)$ need not contain $t_i$, the union need not be contractible, and the count acquires the fragmentation Appendix~\ref{sec:leakscale} measures.

\section{The null's size and the resolution limit}\label{app:floor}
With $N$ control images the smallest attainable $p$-value is $1/(N+1)$, while the Benjamini--Hochberg threshold at rank $k$ among $61$ tests is $0.05k/61=0.00082k$. At $N=61$ the floor is $0.016$, which exceeds that threshold at every rank below $20$, so any non-empty rejection set must contain at least $20$ hypotheses (the step-up procedure rejects everything through the largest rank whose $p$-value meets its threshold). Here $32$ images sit at the floor, the threshold at rank $32$ is $0.026$, and exactly those $32$ are certified. Enlarging the null is what moves this, and enlarging it in the right direction: the null statistic shows no monotone association with how many generations each control caption gets (Spearman $+0.09$, $p=0.50$, over $n_i$ from $1$ to $138$). That speaks to its typical value, not to its spread, which can change with the count without the level moving; the matched-count rerun below is the direct check. More captions buy resolution and more generations of each do not. At $N=300$ the floor is $0.0033$, binding only below rank $5$, and the audit certifies $36$---the original $32$ still at the floor, four mid-range images gained, none lost. Storey's adaptive procedure with $\lambda=150/301$ estimates $\hat\pi_0=0.29$ and certifies $44$. Theorem~3 of \citet{bates2023conformal} proves that this procedure controls the false-discovery rate for marginal conformal $p$-values when $\lambda=K/(N+1)$ for an integer $K$, which is why $\lambda$ sits on that grid; our $p$-values are multiples of $1/301$, so this $\lambda$ selects the same images as $\lambda=0.5$ would, and the theorem applies as stated. Its conditions are a continuous statistic and null test images exchangeable with, and independent of, the controls; the one to argue is exchangeability, since a memorized image has $n_i$ generations and a control $12$. Two observations bear on it, as robustness rather than proof: the null statistic's level shows no association with the generation count, as just stated, and the audit rerun with every memorized image limited to $12$ generations, which matches the counts, certifies $35.4\pm1.3$ and $42.4$ images over $50$ subsamples.

\paragraph{Power.} Mean and standard deviation of the certified count over $50$ subsamples (\texttt{genmem\_audit\_power.py}). In every cell a fifth of the kept controls is held out as negatives and calibrated on the rest, and no cell produces a discovery among them.
\begin{center}\small
\begin{tabular}{@{}lrrrr@{}}
\toprule
generations per image & $N=50$ controls & $100$ & $200$ & $300$ \\
\midrule
$1$ & $29.5\pm5.1$ & $29.8\pm3.2$ & $30.5\pm2.9$ & $30.7\pm2.9$ \\
$2$ & $33.5\pm7.6$ & $35.5\pm3.3$ & $36.6\pm2.3$ & $37.0\pm2.6$ \\
$4$ & $33.6\pm5.9$ & $35.5\pm2.7$ & $36.2\pm2.6$ & $35.9\pm2.4$ \\
$8$ & $33.9\pm2.7$ & $35.1\pm2.7$ & $35.1\pm1.6$ & $35.8\pm1.6$ \\
$12$ & $33.9\pm2.9$ & $35.0\pm2.5$ & $34.9\pm1.6$ & $35.4\pm1.3$ \\
\bottomrule
\end{tabular}
\end{center}

Every arm of the audit is embedded from $128$-pixel generations resized to SSCD's $288$; an earlier draft had embedded the matched null from $64$-pixel arrays, and re-embedding that arm at $128$ pixels leaves the count at $36$ with no false discovery among the $300$ controls.

\paragraph{What the audit declines.} The $25$ images the audit does not certify have median own-similarity $0.076$, against $0.455$ for the $36$ certified and $0.057$ for the control null. They sit at the $70$th percentile of that null and not one of them exceeds the largest value among the $300$ controls ($0.152$ against $0.316$), so the median-score Benjamini--Hochberg audit does not reject them. That is a statement about this statistic and this procedure: Storey's procedure takes eight of the $25$, and the calibrated maximum of Section~\ref{sec:sd} more. The realised rejection threshold at this null size is an own-similarity of $0.179$, the eighth-largest control value; the strongest declined image ($0.152$) and the weakest certified one ($0.202$) sit on either side of it. As a group, however, they are elevated above the controls (Mann--Whitney $p=0.014$), so they are not simply errors in the benchmark either. The reading is that MemBench's verified list mixes images this model reproduces strongly---the $36$, whose weakest member sits at $0.202$---with a tail whose reproduction, if it occurs at all, is too faint for the median audit at this null size.

\paragraph{The negative control.} Leave-one-out over the controls cannot serve. When each of the $300$ is ranked against the other $299$, the ordered $p$-values are exactly $k/300$, twenty times the Benjamini--Hochberg line $0.05k/300$ at every rank, so zero discoveries is automatic, and leave-one-out within any subset of $s$ controls gives $p$-values $k/s$ against the line $0.05k/s$, the same ratio. A control that can fail holds out $61$ controls as negatives, calibrates on the other $239$ and applies Benjamini--Hochberg at $q=0.05$ over the $61$. Over $20{,}000$ random splits a discovery occurs in $0.01\%$ of them (mean $0.0006$ discoveries, maximum $6$), against a nominal bound of $5\%$; the procedure is conservative here because the floor $1/240$ requires six held-out controls above every calibration value before anything is rejected. The same check on the nulls generated under each mitigation and for v1.5 and RealisticVision gives at most $0.04\%$ over $5{,}000$ splits each (\texttt{genmem\_review\_checks.py}).

\section{Stable Diffusion v1.5 and RealisticVision 5.1}\label{app:v15}
Stable Diffusion v1.5 continues v1.4's training on the same data. Generating the same $1{,}826$ trigger prompts gives the same picture. In pixel space the set-level test fires in every one of five control draws at both resolutions ($p_{\mathrm{small}}\le0.010$) while its controls are null; in embedding space the per-generation median own-source similarity is $0.175$, with $1$ of the $2{,}365$ control generations above the copy threshold. Against a matched null generated for v1.5 itself, the same $300$ control captions at $12$ generations each, the per-image audit certifies $35$ of $61$ at $5\%$ FDR with no false discovery among the $300$ controls; the i.i.d.\ null that inflates v1.4's count to $47$ gives $49$ here, an overestimate of the same kind. The memorized arm's median own-similarity is $0.235$ against $0.329$ for v1.4, over its own null, with mean $0.062$. Of the $35$, $32$ are among v1.4's $36$, and by the fixed rule the two models flag $39$ and $40$ with $36$ in common: the memorization persists across these related checkpoints, though, since v1.5 continues v1.4's training, the overlap cannot separate the data from inherited weights. At the pixel level the two models copy $5$ and $10$ images and share $5$. Here we report the tally rather than the overlap count, because the leak count does not survive at this scale; Appendix~\ref{sec:leakscale} measures why, and the reason is not cloud size but the near-absence of pixel-level copies for it to lock onto.

RealisticVision 5.1, the most-used community fine-tune of v1.5 (\texttt{SG161222/Realistic\_Vision\_V5.1\_noVAE}), generates the same $1{,}826$ trigger prompts and its own $300$-caption null with the same seeds and sampler. The audit certifies $30$ of $61$ at $5\%$ FDR with no false discovery among the $300$ controls (adaptive $31$; the fixed rule flags $33$; memorized median own-similarity $0.183$ over a null with mean $0.068$). Of the $30$, $29$ are among v1.4's $36$ and $26$ are certified for all three models: the fine-tune inherits its memorization from the base. In pixel space $5$ of its generations are copies at $64$ pixels ($9$ at $32$), and the set-level test rejects in all five control draws at both resolutions ($p_{\mathrm{small}}$ from $0.010$ to $0.035$).

\section{Stable Diffusion 2: memorization by template}\label{app:sd2}
A second model line, with a candidate list assembled differently. We audit Stable Diffusion 2 ($768$ pixels, $v$-prediction, OpenCLIP conditioning; the public verbatim mirror \texttt{sd2-community/stable-diffusion-2} of the now login-gated release) on the $219$ memorized prompts distributed by \citet{ren2024unveiling}, extracted by \citet{webster2023reproducible}, whose source images we recovered by matching the prompts to LAION-2B captions in the most-duplicated metadata of \citet{webster2023reproducible} and fetching the live URLs: $197$ prompts have a live source, $164$ distinct images after near-duplicate merging, mostly Shaw Floors product photographs. We draw eight generations per prompt (DDIM, $50$ steps, guidance $7.5$, seeds fixed by prompt and index) and a $300$-caption null at $12$ generations each, embedded at $128$ pixels as before. The own-source audit of Section~\ref{sec:sd} certifies none of the $164$ images: the memorized arm's median own-similarity is $0.030$, \emph{below} the null's $0.058$, and the fixed rule flags $22$ images on $2.5\%$ of generations. The generations are copies nonetheless: $1{,}184$ of $1{,}576$ have SSCD similarity above $0.5$ to some candidate image, and only $40$ to the one their caption was matched to. The candidates fall into $56$ families at mutual similarity $0.5$---one of $65$ images, a room photographed with different floorings, and $38$ singletons---and a trigger prompt reproduces the family, not the swatch: this is the template memorization of \citet{webster2023reproducible}, and caption matching assigns a prompt one member of its template. The audit's unit must match the model's. With the statistic \emph{median over a prompt's generations of its maximum similarity to any candidate image}, computed for control captions against the same candidate set, $189$ of $197$ prompts are selected at the $5\%$ level (memorized median $0.72$ against a null median of $0.09$ and maximum of $0.16$). Applying one function to both groups does not make their statistics exchangeable: the controls are generic captions and the candidates are mostly flooring, so a non-memorized flooring caption could sit above this null, and the false-discovery statement holds only under an exchangeability the design does not enforce. The separation is large, over four times the null's maximum, and we report it descriptively with that caveat. At the family level the statistic of a family is the median, over all generations of the prompts whose matched source lies in the family, of a generation's maximum similarity to the family's images; its null is the same quantity for each control caption against that family's images, and Benjamini--Hochberg over the $56$ families selects $27$, and $6$ of the $9$ with three or more images. Families are tested with different score functions on the same control generations, a dependence the theorem of \citet{bates2023conformal} does not cover, so the family-level count is Benjamini--Hochberg applied heuristically. On Stable Diffusion v1.4 the semantic effect measured in Section~\ref{sec:sd} is a median own-similarity of $0.05$. Caption matching is a weaker ground truth than reverse image search, and it is the audit that verifies the list. In pixel space no generation lies within a quarter of the spacing of its matched source, as the template reading predicts, yet the set-level certificate against the $164$ candidates fires in every one of five control draws at $64$ and at $32$ pixels ($p_{\mathrm{small}}=0.005$ throughout, $4.0$--$5.3$ null standard deviations): the near-duplicate scale in the five-dimensional shadow registers the template that no single assigned source matches.

\section{Synthetic protocols}\label{app:synthetic}

\paragraph{Synthetic clouds.} On CIFAR-10 pixels \citep{krizhevsky2009learning}, $m=200$ training and held-out images and $200$ ``samples'' constructed three ways: copies of the training images with $10\%$ of the nearest-neighbour spacing of Gaussian noise (a memorizer); $20\%$ copies and $80\%$ other training images (a partial memorizer); and other training images only (a non-memorizer). At $B=199$: the memorizer fires on all three statistics at $p=0.005$, the smallest attainable value, with $T_{\mathrm{small}}$ twenty null standard deviations out; the non-memorizer is null on all three ($p=0.42$, $0.56$, $0.74$); and the partial memorizer separates the statistics---the all-scale mass reaches only $p=0.08$ while the small-scale mass and the scale statistic fire at $p=0.005$. The small-scale mass is the statistic we report as primary.

\paragraph{Readouts.} Three constructions with the same protocol, each designed so that a nearest-neighbour count cannot resolve it.
\emph{Leak count.} Two sample sets with identical copy fraction and an identical nearest-neighbour fraction of $0.592$: one copies $100$ distinct training images once each, the other copies $5$ images twenty times each. The excess component count reads $92$ and $10$ (ground truth $100$ and $5$).
\emph{Fidelity.} Copies of $200$ training images at noise from $2\%$ to $80\%$ of the spacing all give a nearest-neighbour fraction of exactly $1.0$; the ambient first interaction scale rises linearly from $0.010$ to $0.383$ of the spacing.
\emph{Shared copies.} Two generators copying image sets $A$ and $B$ with $|A\cap B|\in\{0,25,50,100\}$: the pairwise excess counts stay near $100$ for both while the three-fold excess reads $5$, $28$, $49$, $94$.

\section{The baseline, in one table}\label{sec:baseline}
\begin{figure}[t]
\centering\includegraphics[width=0.74\textwidth]{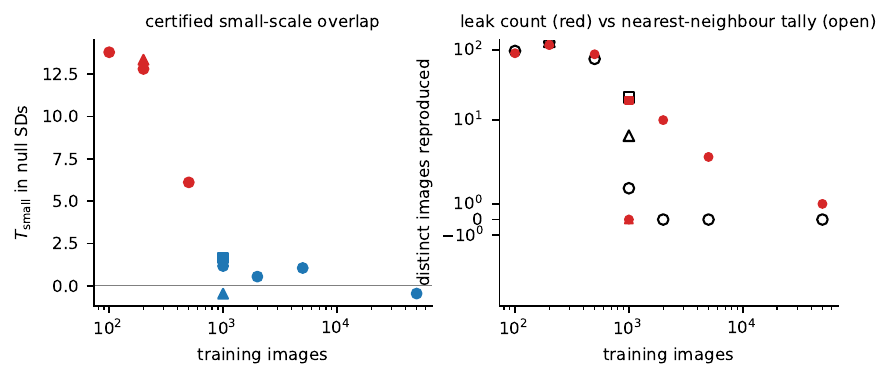}
\caption{The training-set-size sweep. Left: the certified small-scale overlap statistic in null standard deviations (red where $p_{\mathrm{small}}\le0.05$; triangles are second seeds, the square is the $300$k-step run). Right: the leak count (red) against the nearest-neighbour tally of distinct training images reproduced (open markers).}
\label{fig:sweep}
\end{figure}

Table~\ref{tab:baseline} puts every experiment of the paper in one place with every statistic under the same swap null. Where pixel-level copies exist, the nearest-neighbour fraction and the topological statistics agree. Where they do not, the two answer different questions: the nearest-neighbour fraction also fires on two of the four seeds at $1{,}000$ images, one of three at $2{,}000$, one of three at $5{,}000$ and the $300$k-step run, whose samples sit slightly closer to training images than to held-out ones on average with no sample a copy of any of them. That is a distribution-level preference for the training set, which is overfitting; the small-scale overlap statistic fires only on reproduction. The SSCD similarity is the only statistic that sees Stable Diffusion's layout-level memorization, and the topological statistics are the only ones that carry a scale, a component count, and a $k$-fold form.

\paragraph{The data-copying null, in a toy.} Data are $N(0,I_{10})$, with $200$ training and $200$ held-out points; a model samples $500$ points from $N(0,s^2I)$ and never sees the training set, so it cannot copy; a fourth model copies a fifth of its samples from the training set with small noise. $Z_U$ is read against the standard normal as \citet{meehan2020nonparametric} do, and the swap test uses the nearest-neighbour fraction with $1{,}999$ relabellings (\texttt{genmem\_toy\_null.py}).
\begin{center}\small
\begin{tabular}{@{}lrrr@{}}
\toprule
model & mean distance to $T$: generated / held-out & $Z_U$ & swap $p$ \\
\midrule
$s=0.7$, no copying & $1.77$ / $2.14$ & $-9.8$ & $0.91$ \\
$s=1.0$, no copying & $2.18$ / $2.14$ & $+0.7$ & $0.74$ \\
$s=1.3$, no copying & $2.69$ / $2.14$ & $+10.3$ & $0.79$ \\
$s=1.0$, $20\%$ copies & $1.80$ / $2.14$ & $-3.7$ & $0.0005$ \\
\bottomrule
\end{tabular}
\end{center}
The concentrated model's samples lie nearer every real point, so its distances to the training set are smaller than a held-out point's, and $Z_U$ reads that as copying at $p=4\times10^{-23}$; relabelling the real points removes it, because those samples are as near the held-out points as the training points.

\paragraph{Two held-out designs.} Every statistic was measured twice on the $26$ generators trained on subsets: with held-out images from CIFAR-10's test split, and from training-split images the model never saw (\texttt{genmem --heldout unused}; \texttt{heldout\_comparison.json}). Detection is unchanged. Decisions on non-reproducers change for five statistics: the nearest-neighbour fraction trades the $5{,}000$-image model ($p=0.016\to0.77$) for a $1{,}000$-image seed ($0.08\to0.02$); the all-scale mass loses three autoencoder and adversarial rejections and gains the $2{,}000$-image model; $C_T$ loses two adversarial rejections; $Z_U$ against its normal reference loses two; and $T_{\mathrm{small}}$ gains the $5{,}000$-image model ($p=0.17\to0.02$). The marginal $300$k-step run, with $24$ copies, moves from $p=0.055$ to $0.02$. The thresholded count and $Z_U$ under the swap null make identical decisions under both designs.

\paragraph{The data-copying test under the swap null.} \citet{meehan2020nonparametric} test the same triple with $Z_U$, the $z$-scored Mann--Whitney statistic comparing the distances of generated points to their nearest training point against the same distances for held-out points, negative when the generated distances are the smaller ones; their cell-weighted form $C_T$ averages $Z_U$ over the cells of a $k$-means partition, weighted by the held-out fraction per cell, over cells holding at least twenty generated points. We compute both on the same five-dimensional shadow and the same three clouds as every other statistic, with the partition fitted on the pooled real points so that it is invariant under relabelling ($k=5$), and with $1{,}999$ relabellings; the column $p_{Z_U}$ of Table~\ref{tab:baseline} is the result. Under the swap null $Z_U$ detects all four reproducers and rejects on five of the twenty-four negatives (three seeds at $1{,}000$ images, one at $2{,}000$ and the $50$-image adversarial network; McNemar against the small-scale statistic $p=0.06$); the cell-weighted $C_T$ detects the four and rejects on six ($p=0.03$). On Stable Diffusion, with $61$ real images per arm, its cells are too sparse and it is inconsistent across draws, so only $Z_U$ is reported there. Read against the standard-normal reference its authors use, and with the test-split held-out set of this appendix, $Z_U$ flags twelve of the twenty-four at one-sided $5\%$ (ten with the random split of Table~\ref{tab:specificity}), seven of the eleven diffusion negatives among them, four of those below $-5$, a value the original paper glosses as being, under its null, as unlikely as a standard-normal draw below $-5$. The relabelling null shows why: the mean of $Z_U$ under exchangeable labels is not zero but runs from $-10.8$ on the $200$-image diffusion models to $+14.2$ on a $200$-image adversarial network, because in pixel space a diffusion model's samples lie closer to every real image than real images lie to each other, and an adversarial network's lie farther. The asymptotic null assumes the two distance samples are exchangeable under $H_0$; they are not, and relabelling the real points is what calibrates the statistic. On Stable Diffusion, $Z_U$ under the swap null fires in every control draw on the memorized prompts of both models at both resolutions ($p\le0.0025$) and also on the control-prompt generations ($p\le0.043$ in every draw at $64$ and $32$ pixels), where the small-scale mass, the all-scale mass and the nearest-neighbour fraction are null. That rejection is a fact about the control set: generic generations sit closer to the memorized images than the LAION control images do, so relative to these generations the memorized and control clouds are not exchangeable as real clouds, and a statistic that compares held-out-to-training distances registers it. The statistics driven by generation-to-real distances do not, and the small-scale statistic cannot, since no control generation is a copy.

\paragraph{Is it the scale or the topology?} The obvious non-topological statistic at the same scale is a thresholded count: generations within $r_0$ of a training image minus those within $r_0$ of a held-out one, and its distinct-image form. Under the swap null on the same clouds both have exactly the specificity of the small-scale mass at the operating point---all four reproducers, none of the twenty-four negatives, and against the nearest-neighbour fraction the same seven discordant pairs to none, McNemar $p=0.016$---so the scale is what buys specificity, and we claim nothing else for the invariant there. What differs is the dependence on the cutoff. Where the hypotheses of Proposition~\ref{prop:leak} hold at a scale $r$ the profile at $r$ counts the distinct training images with a generation within $r$; hypothesis (ii) is scale-dependent, so this holds across all of $[0,r_0]$ only when each training image's near generations are clustered tightly enough that their $r$-balls share a point with its own. To that extent $T_{\mathrm{small}}=\int_0^{r_0}[N_T(r)-N_H(r)]\,dr$ with $N$ the distinct-image count at threshold $r$: the mass is the thresholded count integrated over every threshold below $r_0$, which is the sense in which it is scale-resolved rather than thresholded, and the stability theorem of \citet{kawamura2026intersection} is its continuous form. Varying the cutoff over $0.15$ to $0.5$ of the spacing on the six generators where either decision is marginal, the count's decision changes on three (it misses the $500$-image reproducer at $0.15$ and rejects the $1{,}000$- and $5{,}000$-image negatives at $0.5$), the mass's on two (the $5{,}000$-image negative from $0.35$ up, $p=0.050$ and $0.015$, and the marginal $300$k-step run, which it rejects from $0.35$ up), and the mass detects the $500$-image reproducer at every cutoff. Integration buys stability and power at the tight end; it is not immunity, and $r_0$ remains a choice fixed in advance.

\begin{table}[ht]
\centering\footnotesize\setlength{\tabcolsep}{2pt}
\begin{tabular}{@{}lrrrrrr@{}}
\toprule
experiment & $T_{\mathrm{small}}$ (null SDs) & $p_{\mathrm{small}}$ & $p_{\mathrm{scale}}$ & $p_{\mathrm{NN}}$ & $p_{Z_U}$ & $p_{\mathrm{SSCD}}$ \\
\midrule
synthetic memorizer & $+20.6$ & $0.0050$ & $0.0050$ & -- & -- & -- \\
synthetic 20\% memorizer & $+6.3$ & $0.0050$ & $0.0050$ & -- & -- & -- \\
synthetic non-memorizer & $-0.0$ & $0.56$ & $0.74$ & -- & -- & -- \\
DDPM, 100 images & $+13.8$ & $0.0050$ & $0.0050$ & $0.0005$ & $0.0005$ & -- \\
DDPM, 200 images & $+12.8$ & $0.0050$ & $0.0050$ & $0.0005$ & $0.0005$ & -- \\
DDPM, 200 images, seed 2 & $+13.4$ & $0.0050$ & $0.0050$ & $0.0005$ & $0.0005$ & -- \\
DDPM, 500 images & $+6.1$ & $0.0050$ & $0.0050$ & $0.0005$ & $0.0005$ & -- \\
DDPM, 1,000 images & $+1.2$ & $0.13$ & $0.49$ & $0.13$ & $0.16$ & -- \\
DDPM, 1,000 images, seed 2 & $-0.5$ & $0.68$ & $0.43$ & $0.0005$ & $0.0095$ & -- \\
DDPM, 1,000 images, seed 3 & $+0.2$ & $0.46$ & $0.77$ & $0.08$ & $0.02$ & -- \\
DDPM, 1,000 images, seed 4 & $-0.7$ & $0.77$ & $0.92$ & $0.02$ & $0.03$ & -- \\
DDPM, 1,000 images, 300k steps & $+1.6$ & $0.06$ & $0.26$ & $0.0005$ & $0.0010$ & -- \\
DDPM, 2,000 images & $+0.5$ & $0.32$ & $0.76$ & $0.67$ & $0.49$ & -- \\
DDPM, 2,000 images, seed 2 & $+0.8$ & $0.21$ & $0.27$ & $0.39$ & $0.75$ & -- \\
DDPM, 2,000 images, seed 3 & $-0.3$ & $0.61$ & $0.76$ & $0.0065$ & $0.04$ & -- \\
DDPM, 5,000 images & $+1.1$ & $0.17$ & $0.23$ & $0.02$ & $0.08$ & -- \\
DDPM, 5,000 images, seed 2 & $-2.0$ & $0.98$ & $0.97$ & $0.69$ & $0.72$ & -- \\
DDPM, 5,000 images, seed 3 & $-0.2$ & $0.57$ & $0.94$ & $0.08$ & $0.92$ & -- \\
DDPM, 50,000 images & $-0.5$ & $0.66$ & $0.74$ & $0.32$ & $0.35$ & -- \\
SD v1.4, memorized prompts, 64 px (5 draws) & $+3.6$ & $0.0050$ & $0.17$ & $0.0005$ & $0.0025$ & -- \\
SD v1.4, memorized prompts, 32 px (5 draws) & $+3.3$ & $0.0050$ & $0.31$ & $0.0005$ & $0.0025$ & -- \\
SD v1.4, memorized prompts, 16 px (5 draws) & $+3.1$ & $0.01$ & $0.31$ & $0.0010$ & -- & -- \\
SD v1.4, memorized prompts, 8 px (5 draws) & $+3.2$ & $0.01$ & $0.54$ & $0.0015$ & -- & -- \\
SD v1.5, memorized prompts, 64 px (5 draws) & $+4.0$ & $0.01$ & $0.04$ & $0.0020$ & $0.0020$ & -- \\
SD v1.5, memorized prompts, 32 px (5 draws) & $+3.9$ & $0.01$ & $0.07$ & $0.0020$ & $0.0020$ & -- \\
SD v1.4, control prompts, 64 px (5 draws) & $+0.5$ & $0.63$ & $0.72$ & $0.44$ & $0.04$ & -- \\
SD v1.4, control prompts, 32 px (5 draws) & $+0.4$ & $0.66$ & $0.72$ & $0.46$ & $0.04$ & -- \\
SD v1.4, controls vs own sources, 64 px (5 draws) & $+0.0$ & $0.95$ & $0.84$ & $0.56$ & -- & -- \\
SD v1.4, memorized prompts, SSCD space & $-1.3$ & $0.89$ & $0.48$ & $0.0005$ & -- & $0.0005$ \\
\bottomrule
\end{tabular}

\caption{Every model-level test of Sections~\ref{sec:ddpm} and~\ref{sec:sd} in one place. $T_{\mathrm{small}}$ in null standard deviations; $p_{\mathrm{scale}}$ is for the first interaction scale $T_{\mathrm{scale}}=r^\ast(G,H)-r^\ast(G,T)$, with $r^\ast(X,Y)$ half the smallest distance between the two clouds; $p$-values from $B=199$ relabellings ($1{,}999$ for the nearest-neighbour, $Z_U$ and SSCD statistics; $99$ for the topological statistics of the SSCD-space row); $p_{Z_U}$ is the data-copying statistic of \citet{meehan2020nonparametric} under the same null. Where several control draws were run, the entry is the median $z$ and the \emph{largest} $p$-value over draws. Seeds 3 and 4 at $1{,}000$ images and seeds 2 and 3 at $2{,}000$ and $5{,}000$ were measured for the specificity comparison of Section~\ref{sec:ddpm}, with the $2{,}000$- and $5{,}000$-image training clouds capped at $1{,}000$.}
\label{tab:baseline}
\end{table}

\section{Which space}\label{app:space}

The natural space for copy detection is a copy-detection embedding, and MemBench validates its prompts with SSCD \citep{pizzi2022sscd}. The swap test is exact in any space for any statistic, so SSCD's own similarity is certified under our null (Section~\ref{sec:sd}); the topological readouts are another matter, since they need distances that vary with scale and deep embeddings concentrate them. \citet{majhi2026disentangle} measure this by the \emph{scale window}---the first-percentile cross distance over the closest pair, minus one---and by the effective rank, the participation ratio $(\sum_i\lambda_i)^2/\sum_i\lambda_i^2$ of the covariance eigenvalues, and find that every route to a scale-resolved measurement fails once the window drops below about $0.1$. Table~\ref{tab:space} applies the diagnostic to the $2{,}541$ real images of the audit.

\begin{table}[ht]
\centering\small
\begin{tabular}{@{}lrrr@{}}
\toprule
space & scale window & effective rank & variance in $5$ components \\
\midrule
SSCD embedding ($1{,}024$-d) & $0.04$ & $574$ & $0.03$ \\
pixels, $32\times32$ & $2.62$ & $4.5$ & $0.64$ \\
pixels, $64\times64$ & $2.60$ & $5.1$ & $0.60$ \\
\bottomrule
\end{tabular}
\caption{Concentration diagnostic on the memorized and control images. SSCD space is in the obstructed regime; pixel space at low resolution is not.}
\label{tab:space}
\end{table}

SSCD space is fully concentrated: a window of $0.04$, an effective rank of $574$ out of $1{,}024$, and three percent of the variance in the five components a projection keeps. Pixel space at either resolution has a window twenty-five times the threshold and an effective rank near five, so a five-dimensional shadow retains most of it. The consequence is visible on the audit data. On the same three clouds in SSCD space the topological statistics are null (small-scale mass $p=0.89$, scale statistic $p=0.48$, all-scale mass $p=0.06$) while the nearest-neighbour fraction fires ($0.728$, $p=0.0005$) and the gap between the $61$ per-image medians and the $2{,}365$ control own-similarities sits $22.9$ null standard deviations out (permutation $p=10^{-4}$): a concentrated space loses the scale-resolved structure, not the signal. Hence the audit's organisation---detection with error control in whichever space the practitioner's detector lives, and the readouts in pixel space, where they are valid.

\section{Auditing the benchmark's mitigations}\label{app:mitigation}
MemBench evaluates four inference-time mitigations, and we audit all four, implemented after the benchmark's code. \emph{Random token addition} inserts four tokens drawn uniformly from the CLIP vocabulary at random positions of the prompt, \emph{random number addition} four integers below $100{,}000$ (the range in the released code): four insertions are the inference-time setting of \citet{somepalli2023understanding}, where the benchmark's own runs default to one and its paper sweeps one to six. \emph{Wen et al.} \citep{wen2024detecting} minimise, before sampling, the magnitude of the text-conditional noise prediction $\|\epsilon(x_T,e)-\epsilon(x_T,e_\emptyset)\|_2$ at the first denoising step over the prompt embedding $e$, with $e_\emptyset$ the embedding of the empty prompt, (AdamW, learning rate $0.05$, at most ten iterations, early stop below a target), and sample from the same initial latent. Their loss is a Frobenius norm over a batch of latents ($4$ in their code, $10$ in the benchmark's), so the benchmark's target of $5$ is a per-sample norm of $1.6$; we optimise one latent per prompt at per-sample targets $1.6$ and $1.0$. Matching the per-sample scale does not make the two optimisation problems the same, since a batch objective can trade one latent's norm against another's; our arm is Wen's method in single-latent form at the benchmark's per-sample level, not a reproduction of the benchmark's run. \emph{Ren et al.} \citep{ren2024unveiling} rescale the cross-attention paid to the start-of-text token by $C_1=1.1$ at every denoising step, the benchmark's setting, which draws attention away from the trigger tokens. The trigger prompts start at a median magnitude of $2.5$ against $0.9$ for control captions---the detection signal of \citet{wen2024detecting}---so $73\%$ of them are optimised at the benchmark's target and $92\%$ at the strong one, against $6\%$ and $42\%$ of control captions. Every arm, memorized and null, is regenerated under the mitigation with the original seeds, stored as $128$-pixel JPEGs, and audited exactly as in Section~\ref{sec:sd}.

\begin{table}[ht]
\centering\small
\begin{tabular}{@{}lrrrrrr@{}}
\toprule
mitigation & certified & rule & pixel copies & $p_{\mathrm{small}}$ ($64$ px) & null mean & memorized median \\
\midrule
none & $36$ of $61$ & $39$ & $39$ & $0.005$ & $0.064$ & $0.329$ \\
RTA & $11$ of $61$ & $22$ & $3$ & $0.255$ & $0.052$ & $0.056$ \\
RNA & $10$ of $61$ & $22$ & $3$ & $0.025$ & $0.058$ & $0.066$ \\
Wen, target $1.6$ & $0$ of $61$ & $21$ & $0$ & $0.130$ & $0.061$ & $0.064$ \\
Wen, target $1$ & $0$ of $61$ & $9$ & $0$ & $0.315$ & $0.048$ & $0.046$ \\
Ren, $C_1=1.1$ & $6$ of $61$ & $17$ & $1$ & $0.150$ & $0.057$ & $0.068$ \\
\bottomrule
\end{tabular}

\caption{The audit under mitigation. Certified: images at $5\%$ false-discovery rate against the matched null generated under the same mitigation ($300$ captions, $12$ generations each; held-out controls yield a discovery in at most $0.04\%$ of splits in every arm, Appendix~\ref{app:floor}). Rule: images with a generation above the benchmark's $0.5$ own-source threshold. Pixel copies: generations within a quarter of the training-image spacing of a memorized image at $64$ pixels. $p_{\mathrm{small}}$: the largest over five control draws; the text gives the number of draws that reject. Null mean: the mean of the $300$ control captions' median own-similarities; memorized median: the median over the $61$ images of the same statistic.}
\label{tab:mitigation}
\end{table}

Table~\ref{tab:mitigation} is the result. The memorized arm's median own-source similarity falls from $0.329$ to the null's level under every mitigation ($0.056$ against $0.052$ for random tokens, $0.066$ against $0.058$ for numbers, $0.064$ against $0.061$ and $0.046$ against $0.048$ for Wen's method), and no image is certified under a mitigation that was not certified before. Random tokens and random numbers keep $9$ of the same $11$ and $10$ images, whose unmitigated own-similarity (median $0.458$) matches that of the images lost ($0.451$): under these two perturbations survival is not explained by prior strength, and we claim no more than that. The decrease itself is tested directly: every one of the $36$ certified images has lower median own-similarity under every mitigation (Wilcoxon signed-rank $p=1.5\times10^{-11}$ per arm; median falls of $0.30$, $0.26$, $0.38$, $0.43$ and $0.30$ for random tokens, random numbers, the two Wen targets and Ren's rescaling), while the control captions move by $0.008$ or less (difference in differences, Mann--Whitney $p<10^{-19}$). The rule's residue is occasional exceedance: the images it flags that the audit does not certify clear the threshold on a median $7\%$, $15\%$, $8\%$, $4.5\%$ and $9.5\%$ of their generations under the five arms and have median own-similarity above the null's median but below the rejection threshold (at the $63$rd to $87$th percentile of the null); the number of generations above the threshold falls from $447$ of the $1{,}811$ embedded generations to $171$, $181$, $73$, $21$ and $114$. Under Wen's method at the benchmark's target, $21$ images are flagged and none is certified by the median audit at $5\%$ FDR. The two scores answer different questions, occasional against typical reproduction, and only one of them carries an error rate; occasional copies that survive a mitigation still matter, and a calibrated maximum is the way to count them. The pixel-space certificate moves the same way: pixel-level copies fall from $39$ to $3$, $3$, $0$, $0$ and $1$, and the set-level test, which rejected in all five control draws before, rejects in $1$ of $5$ under random tokens ($p_{\mathrm{small}}$ from $0.025$ to $0.26$ at $64$ pixels), in $2$ and $1$ of $5$ under the two Wen settings ($0.005$ to $0.13$, and $0.03$ to $0.32$), in $1$ of $5$ under Ren's rescaling ($0.03$ to $0.15$), and still in $5$ of $5$ under random numbers ($0.005$ to $0.025$); the counts are the same at $32$ pixels except under Ren's rescaling, where $3$ of $5$ draws reject.

\section{Shared memorization as a certified test}\label{app:kfold}
For two generators $G_1,G_2$ and real clouds $T,H$, the three-fold small-scale mass $S_3=\int_0^{r_0}|\chi(\U(G_1;r)\cap\U(G_2;r)\cap\U(T;r))|\,dr$ minus the same with $H$ is, under the null that $T$ and $H$ are exchangeable given $G_1$ and $G_2$, an exact test by Proposition~\ref{prop:swap} with the pair $(G_1,G_2)$ in the role of $G$; the pairwise masses $S_1,S_2$ and the tally intersection (training images with a generation of \emph{both} models within $r_0$, less the held-out count) are computed under the same relabellings, $B=199$, on the five-dimensional shadow of the four pooled clouds. Three cases on the CIFAR-10 diffusion models. \emph{Same data:} the two $200$-image seeds; $S_3$ at $53.3$ against a null of $0.3\pm4.6$ ($p=0.005$), both pairwise masses at $13$ standard deviations, the tally intersection at $84$ against $0.4\pm8.0$, and the three-fold count at $r_0$ reading $116$ against the ambient tally's $109$ shared images. \emph{Disjoint data:} the $100$- and $200$-image models, whose training subsets share one image, against $T$ = their pooled $299$ training images and $299$ held-out images; $S_1$ and $S_2$ at $10$ standard deviations ($p=0.005$), $S_3$ at $0.5$ against $0.0\pm0.4$ ($p=0.23$), the tally intersection $1$ ($p=0.50$), the three-fold count $2$; the tally finds the one shared image, which is all the two models could share: each can copy only its own training images, so a chance-overlap reference that treats the $299$ pooled images as eligible for both does not apply. Each model depends on the training set, so the pairwise null is false and both pairwise tests say so; the three-fold statistic is null because there is nothing they reproduce in common. Note what this does and does not show. The exchangeability null under which the three-fold test is exact is more restrictive than ``no shared memorization'' and already fails here, where each model memorizes its own images; the non-rejection shows that the statistic is selective for shared copies, not that the test controls error under the weaker hypothesis. \emph{Neither memorizes:} the two $1{,}000$-image seeds, whose tallies of $2$ and $6$ copies happen to share one image; $S_3$ is identically zero under every relabelling ($p=1$), the pairwise masses are null ($p=0.87$ and $0.52$), and so is the tally intersection. The tally intersection under the same null is the honest baseline and agrees on every decision; what the three-fold profile adds is the same as elsewhere in the paper, a count with no assignment step and a statistic integrated over scales.

\section{The test on two other model families}\label{app:nondiff}

\paragraph{Generators that are not diffusion models.} Nothing in the swap null mentions diffusion. It says that conditionally on $G$ and on the unordered pooled real set, ``training'' and ``held-out'' are exchangeable labels, which holds for any process that emits samples. To check that the generality is real rather than nominal we trained a variational autoencoder and an adversarial network (each with a $1.2$M-parameter decoder or generator) on the same CIFAR-10 subsets and passed their samples to the same measurement code, unmodified.

The overlap mass fires on both models trained on $200$ images ($p_{\mathrm{mass}}=0.015$ for the network, $0.025$ for the autoencoder) and is null on both trained on the full $50{,}000$ ($0.960$ and $0.855$): sensitivity and specificity outside diffusion against the null as stated. Read as a claim about reproduction those rejections are false positives, and Section~\ref{sec:ddpm} counts them as such; every statistic is in Table~\ref{tab:nondiff}. What these models do not do is \emph{copy}. Neither puts a single sample within a quarter of the training-image spacing, nor does a third attempt with an adversarial network on only $50$ images---whose samples lie nearer a training image than a held-out one $83.6\%$ of the time, so the nearest-neighbour fraction rejects at $p=0.011$ while the small-scale statistic correctly abstains at $p=0.56$. That is the cleanest case in the paper of the two statistics parting company, and Section~\ref{sec:ddpm} counts how often they do. This follows from the objectives. The exact minimiser of the denoising objective on a finite training set is the empirical distribution smoothed by the noise kernel, so a diffusion model's optimum \emph{is} its training points; an adversarial objective asks only for a distribution match, and nothing in it forces pixel-exactness even under complete mode collapse. Diffusion models are the ones that copy, which is why the theory of memorization has concentrated there~\citep{kadkhodaie2024,kamb2024}.

\begin{table}[ht]
\centering\small
\begin{tabular}{@{}llrrrrr@{}}
\toprule
model & training set & $p_{\mathrm{mass}}$ & $p_{\mathrm{small}}$ & $p_{\mathrm{NN}}$ & $p_{Z_U}$ & pixel copies \\
\midrule
GAN & $50$ & $\mathbf{0.010}$ & $0.555$ & $\mathbf{0.011}$ & $\mathbf{0.017}$ & $0/500$ \\
GAN & $100$ & $0.235$ & $0.540$ & $0.111$ & $0.390$ & $0/500$ \\
GAN & $200$ & $\mathbf{0.015}$ & $1.000$ & $0.064$ & $0.123$ & $0/500$ \\
GAN & $200$, seed 2 & $0.275$ & $1.000$ & $0.501$ & $0.052$ & $0/500$ \\
GAN & $500$ & $0.635$ & $0.990$ & $0.830$ & $0.812$ & $0/500$ \\
GAN & $1{,}000$ & $0.110$ & $0.950$ & $0.163$ & $0.263$ & $0/500$ \\
GAN & $50{,}000$ & $0.960$ & $0.485$ & $0.110$ & $0.428$ & $0/500$ \\
VAE & $200$ & $\mathbf{0.025}$ & $0.965$ & $\mathbf{0.028}$ & $0.238$ & $0/500$ \\
VAE & $200$, seed 2 & $\mathbf{0.035}$ & $0.620$ & $0.098$ & $0.167$ & $0/500$ \\
VAE & $500$ & $0.940$ & $0.675$ & $\mathbf{0.037}$ & $0.196$ & $0/500$ \\
VAE & $1{,}000$ & $0.815$ & $0.510$ & $0.281$ & $0.426$ & $0/500$ \\
VAE & $2{,}000$ & $0.635$ & $0.175$ & $0.236$ & $0.307$ & $0/500$ \\
VAE & $50{,}000$ & $0.855$ & $0.145$ & $0.281$ & $0.503$ & $0/500$ \\
\bottomrule
\end{tabular}
\caption{The same test, unmodified, on two model families it was not designed against, over training sets from $50$ to $50{,}000$ images, with the test-split held-out set. Not one of the thirteen places a sample within a quarter of the training-image spacing, so every rejection here is overfitting without reproduction: the all-scale mass rejects on four, the nearest-neighbour fraction on three and the data-copying statistic $Z_U$ of \citet{meehan2020nonparametric} on one, while the small-scale statistic rejects on none of them.}
\label{tab:nondiff}
\end{table}

\section{Constants}\label{app:constants}
\emph{Invariant.} Alpha complexes \citep[GUDHI,][]{maria2014gudhi} on the $d_0=5$ PCA shadow of the pooled clouds; profiles on a $200$-point scale grid from $0$ to the median cross-cloud distance; the near-duplicate scale $r_0$ is a quarter of the median nearest-neighbour spacing of the real points. \emph{DDPMs.} U-Net with $64$ base channels and $3.9$M parameters, $32\times32$, $1{,}000$-step linear $\beta$ schedule, Adam at $2\times10^{-4}$, batch $128$, $100{,}000$ steps ($300{,}000$ where stated), DDIM sampling with $100$ steps and $\eta=0$; $500$ samples per model. \emph{Stable Diffusion.} v1.4, v1.5 and RealisticVision 5.1 in half precision, and Stable Diffusion 2 at $768$ pixels with $v$-prediction, all with DDIM at $50$ steps, guidance $7.5$, and seeds fixed by a hash of the prompt (and the generation index where there are several); generations stored at $64\times64$ and $32\times32$ by bicubic resampling and, for the embedding-space audit, as $128$-pixel JPEGs. \emph{Images.} MemBench source images and LAION controls center-cropped to squares and resampled the same way; SSCD embeddings from the public \texttt{disc\_large} model on $288\times288$ inputs, $\ell_2$-normalised.

\section{Why the leak count is not reported for Stable Diffusion}\label{sec:leakscale}

The leak count is an excess: overlap components of $(G,T)$ at the near-duplicate scale, less the same count against a held-out cloud. On the DDPMs it tracks the nearest-neighbour tally of distinct training images reproduced to within $16\%$. On Stable Diffusion it does not, and it is worth being precise about the failure, because the obvious diagnosis is wrong.

\emph{The background is not the problem.} Computed as the paper's swap statistic is---one PCA and one near-duplicate scale $r_0$ for the pooled real cloud $T\cup H$, so that the two counts are commensurable---the count against the $61$ memorized images is $42.9\pm3.5$ against $13.1\pm4.1$ for a held-out control cloud, over $15$ draws of that cloud from the $2{,}480$ live LAION images. The excess is $29.7\pm5.5$, a standard error of $1.4$. Estimating the background better does not help: it is small and well separated from the signal, and the spread that remains is the projection's, not the control sample's.

\emph{The excess is real.} Repeating the whole construction with the $2{,}396$ control generations in place of the memorized ones---generations that reproduce nothing, with an ambient tally of $0$---gives an excess of $5.3\pm12.8$ over $15$ draws, a $95\%$ interval of $[-1.2, +11.7]$ that covers zero, against $[+26.9, +32.5]$ for the memorized generations. So the excess separates a model that reproduces from one that does not. What follows is not that the readout is noise; it is that the number it produces is not a count of images.

\emph{What the count counts.} Only $5$ memorized images have a generation within a quarter of the image spacing in ambient pixel space. In the five-dimensional shadow, $31$ lie within the near-duplicate scale of some generation, against $11$ control images---layout-level proximity, which is what this model's memorization mostly is, rather than reproduction. The count reads $43$ for those $31$, over-reporting by a further $40\%$ as components fragment. Neither number is the quantity the readout is defined to give.

\emph{It depends on how many samples were drawn.} Subsampling the generations, the count reads $4$, $11$, $18$, $19$ and $43$ at $|G|=61$, $122$, $305$, $610$ and $1{,}826$, tracking the contacted-image count ($4$, $10$, $16$, $19$, $31$) closely until the largest cloud. The ambient tally over the same subsamples moves only $0$, $2$, $4$, $4$, $5$. A count of distinct leaked images must saturate as more samples are drawn; this one does not, because at this scale in a projection most contacts are not copies.

The readout is therefore valid as a count where reproduction is near-exact, which is the DDPM regime and not this one, and we report the nearest-neighbour tally instead. The limitation is the readout's, not the test's: the set-level certificate and the per-image audit do not depend on it.

\section{Diagnostics behind the counts}\label{app:diag}
\emph{Background.} A five-dimensional shadow admits chance near-neighbours: on the DDPM sweep the overlap component count against the held-out cloud runs from $5$ to $35$, and every leak count is reported as the excess over it. \emph{Three clouds.} The near-duplicate scale must be taken from the real cloud alone; a generator cloud full of copies collapses the median spacing and reads the count far below the merge scale (we found this on the two $200$-image seeds, where the uncorrected count fell to a small fraction of the ground truth of $109$; corrected, $108$ by the readout protocol, against $116$ on the swap statistic's scale grid, the same distinction as in Figure~\ref{fig:method}). \emph{Semantic leak.} With SSCD similarity as the statistic, a control caption's generation ranks its own source image above random images at median $p=0.12$ across $2{,}365$ competitors; $38\%$ of controls fall below $0.05$ and the Benjamini--Hochberg step-up selects $596$ of them. The own-source similarities themselves are small (median $0.052$, one of $2{,}365$ above $0.5$), which is why the calibrated null of Section~\ref{sec:sd} works and the rank design does not. In pixel space the own-source design is null (Table~\ref{tab:baseline}).

\end{document}